
\documentclass{article}

\usepackage{microtype}
\usepackage{graphicx}
\usepackage{subfigure}
\usepackage{booktabs} 

\usepackage{mathtools} 

\usepackage{hyperref}
\hypersetup{colorlinks,
            linkcolor=blue,
            citecolor=blue,
            urlcolor=magenta,
            linktocpage,
            plainpages=false}

\usepackage[utf8]{inputenc} 
\usepackage[T1]{fontenc}    
\usepackage{hyperref}       
\usepackage{url}            
\usepackage{booktabs}       
\usepackage{amsfonts}       
\usepackage{nicefrac}       
\usepackage{microtype}      
\usepackage{xcolor}         

\usepackage{bm}
\usepackage{bbm}
\usepackage{comment}         
\usepackage{multicol}
\usepackage{multirow}
\usepackage{caption}
\usepackage{natbib}
\usepackage{braket}







\usepackage{amsmath,amssymb,amsthm}

\usepackage{color}

\usepackage{graphicx}



\DeclareMathOperator*{\argmin}{arg\,min}


\usepackage{times}

\newcommand{\pa}{\mathrm{\pa}}

\newcommand{\RN}[1]{%
  \textup{\uppercase\expandafter{\romannumeral#1}}%
}

\usepackage{xcolor}
\newcount\Comments  
\Comments=1 
\newcommand{\kibitz}[2]{\ifnum\Comments=1\textcolor{#1}{#2}\fi}

\usepackage{algorithm}
\usepackage{algorithmic}

\usepackage{authblk}
\allowdisplaybreaks

\usepackage{arxiv}


\usepackage{amsmath}
\usepackage{amssymb}
\usepackage{mathtools}
\usepackage{amsthm}

\usepackage[capitalize,noabbrev]{cleveref}

\theoremstyle{plain}
\newtheorem{theorem}{Theorem}[section]
\newtheorem{proposition}[theorem]{Proposition}
\newtheorem{lemma}[theorem]{Lemma}
\newtheorem{corollary}[theorem]{Corollary}
\theoremstyle{definition}
\newtheorem{definition}[theorem]{Definition}
\newtheorem{assumption}[theorem]{Assumption}
\theoremstyle{remark}
\newtheorem{remark}[theorem]{Remark}
\newtheorem*{example}{Example}

\usepackage[textsize=tiny]{todonotes}


\title{Best-of-Both-Worlds Linear Contextual Bandits}

\author[1]{Masahiro Kato}
\author[2]{Shinji Ito}

\affil[1]{University of Tokyo, \href{mailto:mkato-csecon@g.ecc.u-tokyo.ac.jp}{mkato-csecon@g.ecc.u-tokyo.ac.jp}}
\affil[2]{NEC Corporation and RIKEN AIP, \href{mailto:i-shinji@nec.com}{i-shinji@nec.com}}

\begin{document}

\maketitle

\begin{abstract}
This study investigates the problem of $K$-armed linear contextual bandits, an instance of the multi-armed bandit problem, under an adversarial corruption. At each round, a decision-maker observes an independent and identically distributed context and then selects an arm based on the context and past observations. After selecting an arm, the decision-maker incurs a loss corresponding to the selected arm. The decision-maker aims to minimize the cumulative loss over the trial. The goal of this study is to develop a strategy that is effective in both stochastic and adversarial environments, with theoretical guarantees. We first formulate the problem by introducing a novel setting of bandits with adversarial corruption, referred to as the contextual adversarial regime with a self-bounding constraint. We assume linear models for the relationship between the loss and the context. Then, we propose a strategy that extends the {\tt RealLinExp3} by \citet{Neu2020} and the Follow-The-Regularized-Leader (FTRL). The regret of our proposed algorithm is shown to be upper-bounded by $O\left(\min\left\{\frac{(\log(T))^3}{\Delta_{*}} + \sqrt{\frac{C(\log(T))^3}{\Delta_{*}}},\ \ \sqrt{T}(\log(T))^2\right\}\right)$, where $T \in\mathbb{N}$ is the number of rounds, $\Delta_{*} > 0$ is the constant minimum gap between the best and suboptimal arms for any context, and $C\in[0, T] $ is an adversarial corruption parameter. This regret upper bound implies $O\left(\frac{(\log(T))^3}{\Delta_{*}}\right)$ in a stochastic environment and by $O\left( \sqrt{T}(\log(T))^2\right)$ in an adversarial environment. We refer to our strategy as the {\tt Best-of-Both-Worlds (BoBW) RealFTRL}, due to its theoretical guarantees in both stochastic and adversarial regimes.
\end{abstract}

\section{Introduction}
This study considers minimizing the cumulative regret in the multi-armed bandit (MAB) problem with contextual information. The MAB problem is a formulation of sequential decision-making. In this study, we develop an algorithm that utilizes side information called contextual information. We focus on linear contextual bandits and aim to design an algorithm that performs well in both stochastic and adversarial environments.

In our problem setting of contextual bandits, a decision-maker observes an independent and identically distributed (i.i.d.) context each round, draws an arm accordingly, and incurs a loss associated with the chosen arm. Additionally, we assume linear models between the loss and contexts, which is known as the linear contextual bandit problem. The contextual bandit problem is widely studied in fields such as sequential treatment allocation \citep{Tewari2017}, personalized recommendations \citep{Beygelzimer2011}, and online advertising \citep{Li2010}. Based on these demands, existing studies explore the methods. For example, \citet{AbeLong1999} studies linear contextual bandits. \citet{Li2021regret} provides lower bounds. There are numerous other studies in this field \citep{Chen2020efficient,Qin2022robuststochastic}.

The settings of linear contextual bandits are divided into stochastic, with fixed contextual and loss distributions, and adversarial environments, with fixed contexts but adversarially chosen losses\footnote{We can define adversarial linear contextual bandits in different ways. For example, there are studies that consider contextual bandits with adversarial contexts and fixed losses \citep{Chu2011contextual,AbbasiYadkori2011}. On the other hand, several studies address contextual bandits with adversarial contexts and adversarial losses \citep{KanadeSteinke2014,HazanKoren2016}. This study only focuses on contextual bandits with i.i.d. contexts and adversarial losses, which have been studied by \citet{Rakhlin2016} and \citet{Syrgkanis2016improved}.}. Most existing studies focus on algorithms for either stochastic \citep{AbeLong1999,Rusmevichientong2010,Chu2011contextual,AbbasiYadkori2011,Lattimore2017} or adversarial linear contextual bandits \citep{Neu2020}.

Thus, optimal algorithms typically differ between the stochastic and adversarial environments. However, a best-of-both-worlds framework exists, aiming for algorithms that are competent in both stochastic and adversarial environments \citep{BubeckSlivkins2012,SeldinSlivkins2014,AuerChiang2016,SeldinLugosi2017,Zimmert2019,Lee2021achieving}. Building on existing work, we propose a best-of-both-worlds algorithm for stochastic and adversarial linear contextual bandits.

\subsection{Main Contribution}
In Section~\ref{sec:prob}, we first introduce the setting of linear contextual bandits with adversarial corruption by defining the linear contextual adversarial regime with a self-bounding constraint. This setting is a generalization of the \emph{adversarial regime with a self-bounding constraint} proposed by \citet{Zimmert2019}. Under this regime, we bridge the stochastic and adversarial environments by an adversarial corruption parameter $C \geq 0$, where $C = 0$ corresponds to a stochastic environment and $C = T$ corresponds to an adversarial environment.

Then, in Section~\ref{sec:algorithm} inspired by the {\tt RealLinEXP3} proposed by \citet{Neu2020} for adversarial contexts, our algorithm uses the Follow-the-Regularized-Leader (FTRL) approach to adapt well to the stochastic environment. Our algorithm design also follows existing studies in best-of-both-worlds (BoBW) studies, such as \citet{ito2022nearly}. We refer to our algorithm as the {\tt BoBW-RealFTRL}.

In Section~\ref{sec:regretanalysis}, we show the upper bound of the {\tt BoBW-RealFTRL} as $O\left(\min\left\{\frac{D}{\Delta_{*}} + \sqrt{\frac{CD}{\Delta_{*}}}, \sqrt{\log(KT)TD}\right\}\right)$, where $D =  {K\log(T)}\left(\log(T) + d\log(K)\right)\log(KT)$, $T$ is the number of rounds, $d$ is the dimension of a context, and $K$ is the number of arms, when there exists a constant minimum gap $\Delta_*$ between the conditional expected rewards of the best and suboptimal arms for any context when we consider a stochastic environment. Note that this regret upper bound holds both for stochastic and adversarial environments. When there does not exist such a gap $\Delta_*$, we show that the regret upper bound is given as $O\left(\sqrt{\log(KT)TD}\right)$. Note that this regret upper bound also holds both for stochastic and adversarial environments, as well as the previous upper bound. Combining them, the regret upper bound is $O\left(\min\left\{\frac{D}{\Delta_{*}} + \sqrt{\frac{CD}{\Delta_{*}}}, \sqrt{\log(KT)TD}\right\}\right)$. Note that the regret upper bound under an adversarial environment can hold without any assumption on the existence of $\Delta_{*}$. Our regret upper bound is $O\left(\min\left\{\frac{(\log(T))^3}{\Delta_{*}} + \sqrt{\frac{C(\log(T))^3}{\Delta_{*}}},\ \ \sqrt{T}(\log(T))^2\right\}\right)$ when focusing on the order with respect to $T$. Furthermore, in a stochastic environment, the regret is upper bounded by $O\left(\frac{(\log(T))^3}{\Delta_{*}}\right)$, and in an adversarial environment, the regret is upper bound by $O\left(\sqrt{T}(\log(T))^2\right)$.

In summary, we contribute to the problem of linear contextual bandits by proposing a best-of-both-worlds strategy. 
Our study enhances the fields of linear contextual bandits and best-of-both-worlds algorithms.

\subsection{Related Work}
In adversarial bandits, the {\tt RealLinExp3}, the algorithm proposed by \citet{Neu2020}, yields $O\left(\log(T)\sqrt{KdT}\right)$. 
In Table~\ref{table:comparison}, we compare our regret upper bounds with the upper bounds of \citet{Neu2020}. 

Regret upper bounds in a stochastic setting are categorized into problem-dependent and problem-independent upper bounds, where the former utilizes some distributional information, such as the gap parameter $\Delta_*$, to bound the regret, while the latter does not. Additionally, problem-dependent regret upper bounds in the stochastic bandits depend on the margin condition characterized by a parameter $\alpha \in [0, +\infty]$ (for the detailed definition, see Remark~\ref{rem:margin}). Our case with $\Delta_*$ corresponds to a case with $\alpha = +\infty$. Note that in the adversarial bandits, the margin condition usually does not affect the upper bounds. \citet{Dani2008StochasticLO} proposes the {\tt ConfidenceBAll}, and \citet{AbbasiYadkori2011} proposes {\tt OFUL}. They both present upper bound with and without the assumption of the existence of $\Delta_*$\footnote{Regret upper bounds with the assumption of the existence of $\Delta_*$ are called problem-dependent.}. As mentioned above, the regret upper bound under the assumption of the existence of $\Delta_*$ corresponds to a case with $\alpha = +\infty$ in the margin condition. In contrast, \citet{Goldenshluger2013}, \citet{Wang2018j}, and \citet{Bastani2020} propose algorithm in a case with $\alpha = 1$. Furthermore, \citet{Li2021regret} propose the $\ell_1$-{\tt ConfidenceBall} based algorithm whose upper bound tightly depends on unknown $\alpha$.

There are several related studies for linear contextual bandits with adversarial corruption, including \citet{Lykouris2018}, \citet{Gupta2019}, \citet{zhao2021linear} and \citet{he2022nearly}. \citet{Lykouris2018}, \citet{Gupta2019}, and \citet{zhao2021linear} consider other corruption frameworks characterized by a constant $\widetilde{C} \in [0, T]$, which is different but related to our linear contextual adversarial regime with a self-bounding constraint. \citet{he2022nearly} uses another constant $\widetilde{C}^\dagger \in [0, T]$ different but closely related to $\widetilde{C}$. For the detailed definitions, see Remark~\ref{rem:lincontextual_corruption}. The essential difference between our and their settings is the existence of the gap $\Delta_*$. Furthermore, while our regret upper bound achieves the polylogarithmic order, those studies show roughly $\sqrt{T}$-order regret upper bounds. \citet{he2022nearly} presents $\widetilde{O}\left(d\sqrt{K} + d\widetilde{C}^\dagger \right)$ regret under an adversarial corruption characterized by a constant $\widetilde{C}^\dagger > 0$.

\begin{table}[t]
 \vspace{-0.5cm}
 \caption{Comparison of the regret. We compared the regret upper bound of our proposed {\tt BoBW-RealFTRL} with the {\tt RealLinExp3} \citep{Neu2020}.}
    \label{table:comparison}
    \centering
    \scalebox{1.0}[1.0]{
    \begin{tabular}{|l|c|c|}
    \hline
        & Regret  & Adversarial/Stochastic \\
    \hline
       Ours ({\tt BoBW-RealFTRL}) &  $O\left(\min\left\{\frac{D}{\Delta_{*}} + \sqrt{\frac{CD}{\Delta_{*}}}, \sqrt{\log(KT)TD}\right\}\right)$ & Both \\
        &  where $
        D =  {K\log(T)}\left(\log(T) + d\log(K)\right)\log(KT)$ &  \\
        & $O\left(\sqrt{\log(KT)TD}\right)$ & Adversarial \\
        & $O\left(\frac{D}{\Delta_{*}}\right)$ ($C = 0$) & Stochastic  \\
    \hline
    {\tt RealLinExp3} & $O\left(\log(T)\sqrt{KdT}\right)$ & Adversarial \\
    \hline
    \end{tabular}
    }
\end{table}

The use of the FTRL approach for adversarial linear bandits is also independently explored by \citet{liu2023bypassing} to relax the assumption used in \citet{Neu2020}. In addition to the difference in contributions, while our algorithm utilizes the Shannon entropy in the regularization of the FTRL, \citet{liu2023bypassing} employs the log-determinant barrier. We expect that combining these two methods will yield a BoBW algorithm with relaxed assumptions, and it is future work.

To establish our BoBW regret bounds,
we utilize the self-bounding technique~\citep{Zimmert2019,wei2018more},
which yields poly-logarithmic regret bounds in stochastic environments.
This is achieved by integrating regret upper bounds that are contingent on the arm-selection distributions $q_t$,
and a lower bound known as self-bound constraints.
The $q_t$-dependent regret bounds are obtained using FTRL with a negative-entropy regularizer,
which is also referred to as the \textit{exponential weight} method.
Our approach includes an entropy-adaptive update rule for learning rates,
originally developed for online learning in feedback graph contexts~\citep{ito2022nearly}.
This strategy has been proven effective in providing BoBW guarantees for exponential-weight-based algorithms across various sequential decision-making problems,
such as multi-armed bandits~\citep{jin2023improved},
partial monitoring~\citep{tsuchiya2023best},
linear bandits~\citep{kong2023best},
episodic Markov Decision Processes (MDPs)~\citep{dann2023best},
and sparse multi-armed bandits~\citep{tsuchiya2023stability}.
However,
a common limitation of these results,
stemming from the negative-entropy regularization,
is the additional $\log T$ factors in the regret bounds.
A promising future direction to mitigate this could be
exploring alternative regularizers like Tsallis entropy or logarithmic barriers.

\section{Problem Setting}
\label{sec:prob}
Suppose that there are $T$ rounds and $K$ arms. In each round $t\in[T] := \{1,2,\dots,T\}$, a decision-maker observes a context $X_t\in\mathcal{X}\subset \mathbb{R}^d$, where $\mathcal{X}$ is a context space. Then, the decision-maker chooses an arm $A_t\in[K] := \{1,2,\dots, K\}$ based on the context $X_t$ and past observations. Each arm $a\in[K]$ is linked to a loss $\ell_{t}(a, X_t)$, which depends on $X_t$ and round $t$. After choosing arm $A_t$ in round $t$, the decision-maker incurs the loss $\ell_{t}(A_t, X_t)$. Our goal is to minimize the cumulative loss $\sum^T_{t=1}\ell_{t}(A_t, X_t)$. We introduce the setting in more detail in the following part.

\paragraph{Contextual distribution.}
Let a distribution of $X_t$ be $\mathcal{D}$, which is invariant across $t\in[T]$. We also assume that $\mathcal{D}$ is known to the decision-maker.
\begin{assumption}[Contextual distribution]
    The context $X_t$ is an i.i.d. random variable, whose distribution $\mathcal{D}$ is \textbf{known} to the decision-maker, and the covariance matrix $\Sigma = \mathbb{E}\left[X_t X^\top_t\right]$ is positive definite with its smallest eigenvalue $\lambda_{\min} > 0$. 
\end{assumption}

\paragraph{Loss.}
In each round $t\in[T]$, a context is sampled from $\mathcal{D}$ and  the environment chooses $\ell_t(\cdot, X_t)$ based on the past observations $\mathcal{F}_{t-1} = (X_1, A_1, \ell_1(A_1, X_1), X_2, \dots, X_{t-1}, A_{t-1}, \ell_{t-1}(A_{t-1}, X_{t-1}))$. We consider a general framework where $\ell_t$ is generated in both stochastic and adversarial ways. See Section~\ref{sec:context_adversarial} for details.

\paragraph{Policy.}
We refer to a function that determines the arm draw as a policy. 
Let $\Pi$ be a set of all possible policies $\pi:\mathcal{X}\to \mathcal{P} := \Big\{ u = (u_1\ u_2\ \dots\ u_K)^\top \in [0, 1]^K \mid \sum^K_{k=1}u_k = 1 \Big\}$. Let $\pi(a\mid x)$ be the $a$-th element of $\pi(x)$. 
The goal of the decision-maker is to minimize the cumulative loss $\sum^T_{t=1}\ell_t(A_t, X_t)$ incurred through $T$ rounds by learning a policy $\pi\in\Pi$.

\paragraph{Procedure in a trial.} In each round of a trial, the decision-maker first observes a context and then chooses an action based on the context and past observations obtained until the round. 
Specifically, we consider sequential decision-making with the following steps in each round $t\in[T]$:
\begin{enumerate}
    \item The environment decides $\big(\ell_t(1, X_t), \ell_t(2, X_t), \dots, \ell_t(K, X_t)\big)$ based on $\mathcal{F}_{t-1}$. 
    \item The decision-maker observes the context $X_t$, which is generated from a known distribution $\mathcal{D}$.
    \item Based on the context $X_t$, the decision-maker chooses a policy $\pi_t(X_t) \in \mathcal{P}$.
    \item The decision-maker chooses action $A_t \in [K]$ with probability $\pi_t(a\mid X_t)$.
    \item The decision-maker incurs the loss $\ell_t(A_t, X_t)$. 
\end{enumerate}
The goal of the decision-maker is to choose actions in a way that the total loss is as small as possible. 

\subsection{Linear Contextual Bandits}
This study assumes linear models between $\ell_t(A_t, X_t)$ and $X_t$ as follows.
\begin{assumption}[Linear models] For each $\ell_t(a, X_t)$, the following holds:
    \begin{align*}
    \ell_t(a, X_t) = x^\top_t \theta_{t}(a) + \varepsilon_{t}(a),
\end{align*}
where $\theta_{t}(a)$ is a $d$-dimensional parameter, and $\varepsilon_{t}(a)$ is the error term independent of the sequence $\{X_t\}_{t\in[T]}$. 
\end{assumption}

For linear models and variables, we make the following assumptions.
\begin{assumption}[Bounded variables]
We assume the following:
\begin{enumerate}
    \item There exists an universal constant $C_{\mathcal{X}} > 0$ such that for each $x \in\mathcal{X}$, $\|x\|_2 \leq C_{\mathcal{X}}$ holds. 
    \item There exists an universal constant $C_{\Theta} > 0$ such that for each $\theta \in\Theta$, $\|\theta\|_2 \leq C_{\Theta}$ holds. 
    \item There exists an universal constant $C_{\mathcal{E}} > 0$ such that $|\varepsilon_{t}(a)| \leq C_{\mathcal{E}}$ holds. 
\end{enumerate}    
\end{assumption}
Under this assumption, there exists $C_{\ell} := C(C_{\mathcal{X}}, C_{\Theta}, C_{\mathcal{E}})$ such that for all $\ell_t(a, X_t)$, the following holds for each $a\in[K]$ and $x\in\mathcal{X}$:
\begin{align*} 
|\ell_t(a, x)| \leq C_{\ell}.
\end{align*}

\subsection{Regret}
This section provides the definition of the regret, a relative measure of the cumulative loss. We evaluate the performance of the decision or policy of the decision-maker by using regret. 
Let $\mathcal{R}$ be a set of all possible $\rho:\mathcal{X}\to [K]$.
The quality of a decision by the decision-maker is measured by its total expected regret, defined as
\begin{align*}
    R_T = \max_{\rho\in\mathcal{R}}\mathbb{E}\left[\sum^T_{t=1}\Big\{ \ell_t(A_t, X_t) - \ell_t(\rho(X_t), X_t)\Big\}\right] = \max_{\rho\in\mathcal{R}}\mathbb{E}\left[\sum^T_{t=1}\Big\langle X_t, \theta_{t}(A_t) - \theta_{t}(\rho(X_t))\Big\rangle\right],
\end{align*}
where the expectation is taken over the randomness of policies of the decision-maker, as well as the sequence of random contexts, $\{X_t\}_{t\in[T]}$, and losses, $\{\ell_t(\cdot, X_t)\}_{t\in[T]}$.  

Let $X_0$ be an i.i.d. random variable from the distribution of $X_t$. Then, because $X_t$ is an i.i.d. random variable from $\mathcal{D}$, we have
\begin{align*}
    \mathbb{E}\left[\sum^T_{t=1}\Big\langle X_t, \theta_{t}(\rho(X_t))\Big\rangle\right] =  \mathbb{E}\left[\sum^T_{t=1}\Big\langle X_0, \theta_{t}(\rho(X_0))\Big\rangle\right] \geq \mathbb{E}\left[\min_{a\in[K]}\sum^T_{t=1}\Big\langle X_0, \mathbb{E}\left[\theta_{t}(a)\right]\Big\rangle\right]. 
\end{align*}

Based on this inequality, we define an optimal policy $a^*$ as
\begin{align*}
    &a^*_T(x) = \argmin_{a\in[K]} \sum^T_{t=1}\Big\langle x, \mathbb{E}\left[\theta_{t}(a)\right]\Big\rangle.
\end{align*}
Then, we have
\begin{align*}
    R_T \leq \mathbb{E}\left[\sum^T_{t=1}\Big\langle X_t, \theta_{t}(A_t) - \theta_{t}(a^*_T(X_t))\Big\rangle\right].
\end{align*}

\citet{Neu2020} refers to $\rho$ as linear-classifier policies, while $\pi_t$ is called stochastic policies. In our study, decision-makers compare their stochastic policies $\pi_t$ to the optimal linear-classifier policy $a^*$ using the regret.

\subsection{Linear Contextual Adversarial Regime with a Self-Bounding Constraint}
\label{sec:context_adversarial}

Then, we define the framework of a \emph{linear contextual adversarial regime with a self-bounding constraint}, which is a generalization of adversarial and stochastic bandits. 

\begin{definition}[Linear contextual adversarial regime with a self-bounding constraint]
\label{def:bounding}
We say that an environment is in an adversarial regime with a $(\Delta_*, C, T)$ self-bounding constraint for some $\Delta_*, C > 0$ if $R_T$ is lower bounded as
\begin{align*}
    R_T \geq \Delta_* \cdot \mathbb{E}\left[\sum^T_{t=1}\Big( 1 - \pi_t(a^*_T(X_0)\mid X_0) \Big)\right] - C.
\end{align*}
\end{definition}

The contextual adversarial regime with a self-bounding constraint includes several important settings. Among them, we raise linear contextual bandits in stochastic bandits and adversarial bandits below.

\begin{example}[Linear contextual bandits in stochastic bandits.]
\label{exm:stochastic}
    In stochastic bandits, the bandit models are fixed; that is, $\big(X_t, \ell_t(1, X_t), \dots, \ell_t(K, X_t)\big)$ are generated from a fixed distribution $P_0$. Let $\theta_{1}(a) = \cdots \theta_{T}(a) = \theta_0(a)$. Note that when considering stochastic bandits, we have $\mathbb{E}\left[\theta_{t}(a)\right] = \theta_0(a)$ and
\begin{align*}
    a^*_T(x) = \argmin_{a\in[K]} \sum^T_{t=1}\Big\langle x, \mathbb{E}\left[\theta_{t}(a)\right]\Big\rangle = \argmin_{a\in[K]} \Big\langle x, \theta_{0}(a)\Big\rangle\qquad \forall x \in \mathcal{X}.
\end{align*}
Let $a^*_T(x)$ be $a^*_0(x)$. 
    
    In this setting, we assume that for each $P_0$, there exist positive constraints $\Delta_*$ such that for all $x \in \mathcal{X}$, 
    \begin{align}
    \label{eq:gap}
    \min_{b\neq a^*_0(x)}\Big\langle x, \theta_0(b)\Big\rangle -  \Big\langle x, \theta_{0}(a^*_0(x))\Big\rangle \geq \Delta_*.
    \end{align}
    Then, the regret can be lower bounded as $R_T \geq \Delta_* \cdot \mathbb{E}\left[\sum^T_{t=1}\Big( 1 - \pi_t(a^*_0(X_0)\mid X_0) \Big)\right]$ (See Appendix~\ref{appdx:example+stochastic}). 
\end{example}

\begin{example}[Linear contextual bandits in adversarial bandits]
In adversarial bandits, we do not assume any data-generating process for the $\ell_t(a, X_t)$, and the loss is decided to increase the regret based on the past observations $\mathcal{F}_{t-1}$.
\end{example}

\begin{remark}[Margin conditions]
\label{rem:margin}
In linear contextual bandits, we often employ the \emph{margin condition} to characterize the difficulty of the problem instance. The margin condition is defined as follows \citep{Li2021regret}: there exist $\Delta_*$, $C_1$, $a^*$, and $\alpha \in [0, +\infty]$, such that for $h\in\left[C_1\sqrt{\frac{\log(d)}{T}}, \Delta_*\right]$, 
\[\mathbb{P}\left(\Big\langle X_t, \theta_{t}(a^*)\Big\rangle \leq \max_{b \neq a^*}\Big\langle X_t, \theta_{t}(b)\Big\rangle + h\right)\leq \frac{1}{2}\left(\frac{h}{\Delta_*}\right)^\alpha.\]
Our definition of the linear contextual adversarial regime with a self-bounding constraint corresponds to a case with $\alpha = \infty$. Extending our results to more general $\alpha$ is a future work.
\end{remark}

\begin{remark}[Linear contextual bandits with corruption in existing studies] 
\label{rem:lincontextual_corruption}
\citet{Lykouris2018}, \citet{Gupta2019}, \citet{zhao2021linear}, and \citet{he2022nearly} propose another definition of linear contextual contextual bandits with corruption. In their work, instead of our defined $\ell_t(a, X_t)$, they define a loss as
\begin{align*}
    \widetilde{\ell}_t(a, X_t) = \ell_t(a, X_t) + \widetilde{c}_t(a),
\end{align*}
where $\widetilde{c}_t(a)$ is an adversarial corruption term. For simplicity, let $\widetilde{c}_t(a) \in [-1, 1]$. In \citet{zhao2021linear}, the degree of the corruption is determined by $\widetilde{C} \in [0, T]$ defined as $\widetilde{C} = \sum^T_{t=1}\max_{a\in[K]}|c_t(a)|$. In \citet{Lykouris2018}, \citet{Gupta2019}, and \citet{he2022nearly}, the corruption level is determined by another parameter $\widetilde{C}^\dagger \in [0, T]$ defined as $\sum^T_{t = 1}|c_t(A_t)|$. Here, $\widetilde{C} \geq \widetilde{C}^\dagger$ holds. Note that the adversarial corruption depends on $A_t$ in \citet{he2022nearly}, while the adversarial corruption is determined irrelevant to $A_t$ in \citet{Lykouris2018}, \citet{Gupta2019}, and \citet{he2022nearly}.
Unlike ours, they do not assume the existence of $\Delta_*$ defined in \eqref{eq:gap}. In this sense, our results and their results are complementary.
\end{remark}

\section{Algorithm}
\label{sec:algorithm}
This section provides an algorithm for our defined problem. Our proposed algorithm is a generalization of the {\tt RealLinEXP3} algorithm proposed by \citet{Neu2020}. We extend the method by employing the Follow-The-Regularized-Leader (FTRL) approach with round-varying arm-drawing probabilities. Our design of the algorithm is also motivated by existing studies about Best-of-Both-Worlds (BoBW) algorithms in different Multi-Armed Bandit (MAB) problems, such as \citet{ito2022nearly}.

In our setting, we first observe a context and then draw an arm based on that context. We consider stochastically drawing an arm. Therefore, in designing the algorithm, our interest lies in appropriately defining the arm-drawing probability. In the FTRL approach, we define this probability by utilizing an unbiased estimator of the loss function.

We refer to our proposed algorithm as {\tt BoBW-RealFTRL} because it modifies the {\tt RealLinEXP3} for a best-of-both-worlds algorithm using the FTRL framework. The pseudo-code is shown in Algorithm~\ref{alg}. In the following part, we explain the algorithm.

\paragraph{Unbiased loss estimator.}
For each $a\in[K]$, let us define an estimator of regression parameters as 
\begin{align*}
    \widehat{\theta}_{t}(a) := \Sigma^{\dagger}_{t, a}\mathbbm{1}[A_t = a]X_t\ell(A_t, X_t),
\end{align*}
where $\Sigma^{\dagger}_{t, a}$ is an estimator of $\mathbb{E}\left[\mathbbm{1}[A_t = a]X^\top_0 X_0\right]^{-1}$. 
Then, the loss can be estimated as
\begin{align*}
    \widehat{\ell}_{t}(a, x) = \left\langle x, \widehat{\theta}_{t}(a) \right\rangle.
\end{align*}

In analysis of adversarial bandits, the bias of $\widehat{\ell}_{t}(a, x)$ plays an important role. 
If $\Sigma^{\dagger}_{t, a} = \mathbb{E}\left[\mathbbm{1}[A_t = a]X^\top_0 X_0\mid \mathcal{F}_{t-1}\right]^{-1}$, then this loss estimator is unbiased  for $x^\top\theta_0(a)$ because 
\begin{align*}
    &\mathbb{E}\left[ \widehat{\ell}_{t}(a, x)\mid \mathcal{F}_{t-1}\right] = x\mathbb{E}\left[ \widehat{\ell}_{t}(a, x)\mid \mathcal{F}_{t-1}\right] = x\Sigma^{\dagger}_{t, a}\mathbb{E}\left[\mathbbm{1}[A_t = a]X_t\ell(A_t, X_t) \mid \mathcal{F}_{t-1}\right]\\
    &= x\Sigma^{\dagger}_{t, a}\mathbb{E}\left[\mathbbm{1}[A_t = a]X_t\left\{X^\top_t \theta_0(a) + \varepsilon_t(a) \right\} \mid \mathcal{F}_{t-1}\right] = x^\top\theta_0(a).
\end{align*}
Note that in our algorithm, $\Sigma^{\dagger}_{t, a}$ is just an estimator of $\mathbb{E}\left[\mathbbm{1}[A_t = a]X^\top_0 X_0\mid \mathcal{F}_{t-1}\right]^{-1}$, and  $\Sigma^{\dagger}_{t, a} = \mathbb{E}\left[\mathbbm{1}[A_t = a]X^\top_0 X_0\mid \mathcal{F}_{t-1}\right]^{-1}$ does not hold in general. Therefore, $\widehat{\ell}_{t}(a, x)$ is not unbiased. However, we can show that the bias can be ignored because it is sufficiently small to evaluate the regret in depth. 
We also define a vector of loss estimators as $\widehat{\ell}_{t}(x) = \left(\widehat{\ell}_{t}(1, x)\ \ \widehat{\ell}_{t}(2, x)\ \ \cdots\ \ \widehat{\ell}_{t}(K, x)\right)^\top$. 

\paragraph{Estimation of $\mathbb{E}\left[\mathbbm{1}[A_t = a]X^\top_0 X_0\mid \mathcal{F}_{t-1}\right]^{-1}$.} Our remaining task is to estimate $\mathbb{E}\left[\mathbbm{1}[A_t = a]X^\top_0 X_0\mid \mathcal{F}_{t-1}\right]^{-1}$. The difficulty of this task stems from the dependency on $A_t$, which varies across rounds. To address this issue, we consider Matrix Geometric Resampling (MGR) proposed by \citet{Neu2020}. 

The MGR assumes that we have access to the distribution $\mathcal{D}$ of $X_t$ and estimates $\mathbb{E}\left[\mathbbm{1}[A_t = a]X^\top_0 X_0\right]^{-1}$ by using simulations. We introduce the algorithm in Algorithm~\ref{alg2}. 

In Algorithm~\ref{alg2}, we define $W_{k, a}$ for which $\mathbb{E}[W_{k, a} \mid \mathcal{F}_{t-1}] = \Sigma_{t, a}$ holds. Here, from the independence of the context $X(k)$ from each other, we also have $\mathbb{E}[V_{k, a} \mid \mathcal{F}_{t-1}] = \mathbb{E}\left[ \prod^k_{j=1} \big( I - \delta W_{j, a}\big) \mid \mathcal{F}_{t-1}\right] = (I - \delta \Sigma_{t, a})^k$. Therefore, $\widehat{\Sigma}^\dagger_{t, a}$ works as a good estimator of $\Sigma^{-1}_{t, a}$ on expectations when $M_t = \infty$ because
\begin{align*}
    \mathbb{E}\left[\widehat{\Sigma}^\dagger_{t, a} \mid  \mathcal{F}_{t-1}\right] = \delta  I + \delta \sum^\infty_{k=1}(I - \delta \Sigma_{t, a})^k = \delta\sum^\infty_{k=0}(I - \delta \Sigma_{t, a})^k = \delta(\delta \Sigma^{-1}_{t, a})^{-1} = \Sigma^{-1}_{t, a}. 
\end{align*}
holds. 

In implementation, $M_t$ is finite, and we introduce an approximation error of $\Sigma^{-1}_{t, a}$ with finite $M_t$ in Lemma~\ref{cor:neu2020}.

\paragraph{Our proposed algorithm: {\tt BoBW-RealFTRL}.}
Then, we define our policy, called the {\tt BoBW-RealFTRL}, as
\begin{align}
\label{eq:policy}
    \pi_t(X_t) := (1-\gamma_t)q_t(X_t) + \frac{\gamma_t}{K}\iota,
\end{align}
where $\iota$ is a $K$-dimensional vector $\iota = (1\ 1\ \cdots\  1)^\top$, 
\begin{align}
    &q_t(x) \in \argmin_{q\in \Pi}\left\{ \sum^{t-1}_{s=1}\left\langle \widehat{\ell}_t(x), q(x)\right\rangle + \psi_t(q(x))\right\}\ \ \mathrm{for}\ t \geq 2,\quad q_1(x) := (1/K\ 1/K\ \cdots\ 1/K)^\top,\nonumber\\
    \label{eq:entropy}
    &\psi_t(q(x)) := -\beta_t H(q(x)),\quad H(q(x)) := \sum_{a\in[K]}q(a\mid x)\log \left(\frac{1}{q(a\mid x)}\right),\\
    \label{eq:entropy2}
    &\beta_{t+1} := \beta_t + \frac{\beta_1}{\sqrt{1 + \big(\log(K)\big)^{-1}\sum^t_{s=1}H\big(q_s(X_s)\big)}},\quad \beta_1 := \omega \sqrt{\frac{\log(KdT)}{\log(K)}},\\
    \label{eq:entropy3}
    &\omega := C_{\ell}C_{\mathcal{X}},\quad \gamma_t := \frac{K}{2\delta \lambda_{\min}\beta_t}\log(T),\\
    &M_t := 2\beta_t - 1,\ \ \ \mathrm{and}\ \ \ \delta := \frac{1}{2C_{\ell}C_{\mathcal{X}}}.\nonumber
\end{align}
This algorithm is an extension of the {\tt RealLinEXP3} proposed by \citep{Neu2020}
and the FTRL. In the studies of BoBW algorithms, the FTRL-based algorithms are often employed, and our algorithm is connected to the literature. 


\begin{algorithm}[tb]
    \caption{{\tt BoBW-RealFTRL}.}
    \label{alg}
    \begin{algorithmic}
    \STATE {\bfseries Parameter:} Learning rate $\eta_1, \eta_2,\dots, \eta_T > 0$, exploration parameter $\gamma\in(0, 1)$. 
    \STATE {\bfseries Initialization:} Set $\theta_{0}(a) = 0$ for all $a\in[K]$.
    \FOR{$t=1,\dots, T$}
    \STATE Observe $X_t$. 
    \STATE Draw $A_t \in [K]$ following the policy $\pi_t(X_t) := (1-\gamma_t)q_t(X_t) + \frac{\gamma_t}{K}\iota$ defined in \eqref{eq:policy}.
    \STATE Observe the loss $\ell_t(A_t, X_t)$.
    \STATE Compute $\widehat{\theta}_t(a)$ for all $a\in[K]$. 
    \ENDFOR
\end{algorithmic}
\end{algorithm}

\begin{algorithm}[tb]
    \caption{Matrix Geometric Resampling \citep{Neu2020}.}
    \label{alg2}
    \begin{algorithmic}
    \STATE {\bfseries Input:} Context distribution $\mathcal{D}$, policy $\pi_t$, action $a\in[K]$. 
    \FOR{$k=1,\dots, M_t$}
    \STATE Draw $X(k) \sim \mathcal{D}$ and $V(k) \sim \pi_t(\cdot \mid X(k))$.
    \STATE Compute $W_{k, a} = \mathbbm{1}[V(k) = a]X(k)X^\top(k)$. 
    \STATE Compute $V_{k, a} = \prod^k_{j=1}(I - \delta W_{k, a})$.
    \ENDFOR
    \STATE {\bfseries Return:} $\widehat{\Sigma}^\dagger_{t, a} = \delta  I + \delta \sum^{M_t}_{k=1}V_{k, a}$.  
\end{algorithmic}
\end{algorithm}

\section{Regret Analysis}
\label{sec:regretanalysis}
This section provides upper bounds for the regret of our proposed {\tt BoBW-RealFTRL} algorithm. 

For notational simplicity, let us denote $a^*_T$ by $a^*$. 
To derive upper bounds, we define the following quantities:
\begin{align*}
    Q(a^*\mid x) = \sum^T_{t=1}\Big\{ 1 - \pi_t\big(a^*(x)\mid x\big) \Big\},\qquad \overline{Q}(a^*) = \mathbb{E}\left[ Q(a^*\mid X_0) \right].
\end{align*}

Then, we show the following upper bound, which holds for general cases such as adversarial and stochastic environments. We show the proof in Sections~\ref{sec:proof1} and \ref{sec:proof2}.
\begin{theorem}[General regret bounds]
\label{thm:regret_bound}
If the environment generates losses under the contextual adversarial regime with a self-bounding constraint (Definition~\ref{def:bounding}), the {\tt BoBW-RealFTRL} with $\widehat{\Sigma}^\dagger_{t, a}$ incurs the total regret 
    \begin{align*}
        R_T \leq O\left(\left( \frac{K\log(T)}{\beta_1}\left(\frac{\log(T)}{ \delta \lambda_{\min}\log(K)} + d\right) + \beta_1\sqrt{\log(K)}\right)\sqrt{\log(KT)}\max\Big\{\overline{Q}^{1/2}(a^*), 1\Big\}\right).
    \end{align*}
\end{theorem}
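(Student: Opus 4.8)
The plan is to decompose the regret into three pieces in the standard FTRL fashion and then control each piece using the specific choices of $\beta_t$, $\gamma_t$, $M_t$, and $\delta$ made in \eqref{eq:policy}--\eqref{eq:entropy3}. First I would pass to the surrogate regret against the optimal linear-classifier policy $a^*$, i.e. bound $R_T \leq \mathbb{E}\left[\sum_{t=1}^T \langle X_t, \theta_t(A_t) - \theta_t(a^*(X_t))\rangle\right]$, which is already justified in the excerpt. Conditioning on each context $x$, the inner product $\langle x, \theta_t(\cdot)\rangle$ plays the role of a loss vector, so the per-context problem is an instance of exponential-weight FTRL with the estimated losses $\widehat{\ell}_t(x)$ and the negative-entropy regularizer $\psi_t = -\beta_t H$. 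I would therefore write, for fixed $x$,
\begin{align*}
\sum_{t=1}^T \left\langle \widehat{\ell}_t(x), q_t(x) - e_{a^*(x)}\right\rangle \leq \underbrace{\beta_{T+1} H(e_{a^*(x)})}_{\text{penalty}} + \underbrace{\sum_{t=1}^T \left(\text{stability term}_t\right)}_{\text{stability}},
\end{align*}
where the stability term is the usual FTRL local-norm bound, of order $\tfrac{1}{\beta_t}\sum_a q_t(a\mid x)\,\widehat{\ell}_t(a,x)^2$ after using the entropy-based learning-rate schedule; the key point is that the telescoping of $\beta_t$ in \eqref{eq:entropy2} pairs the penalty $\beta_{T+1}\log K$ against $\sum_t \tfrac{1}{\beta_t}(\cdots)$ so that both sides end up proportional to $\beta_1\sqrt{\log K}\,\sqrt{\sum_t H(q_t(X_t))}$, which after taking expectations and a Jensen/Cauchy--Schwarz step becomes $\beta_1\sqrt{\log K}\,\sqrt{\log(KT)}\max\{\overline Q^{1/2}(a^*),1\}$. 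This is the mechanism that produces the $\beta_1\sqrt{\log(K)}\sqrt{\log(KT)}$ summand in the theorem, and the bound $H(q_t(X_t)) = O(\log(1/(1-\pi_t(a^*\mid X_t))) )$-type inequality (entropy controlled by the mass off the optimal arm) is what converts $\sum_t H$ into $\overline Q(a^*)$ up to logs.

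Next I would handle the three error sources that distinguish this algorithm from vanilla FTRL. (i) The exploration mixing $\pi_t = (1-\gamma_t)q_t + \tfrac{\gamma_t}{K}\iota$ costs an additive $\sum_t \gamma_t \cdot O(C_\ell C_{\mathcal X})$; with $\gamma_t = \tfrac{K\log T}{2\delta\lambda_{\min}\beta_t}$ and $\beta_t$ increasing, $\sum_t \gamma_t$ is controlled by $\tfrac{K\log T}{\delta\lambda_{\min}\beta_1}\cdot(\text{something like }\sqrt{T}\text{ or }\log T)$ — more precisely one uses the same $\sum_t 1/\beta_t \lesssim \tfrac{1}{\beta_1}\sqrt{\sum_t H(q_s(X_s))/\log K}$ identity, which is exactly why the $\tfrac{K\log T}{\beta_1}\cdot\tfrac{\log T}{\delta\lambda_{\min}\log K}$ term appears with the same $\sqrt{\log(KT)}\max\{\overline Q^{1/2},1\}$ factor. (ii) The bias of $\widehat{\ell}_t$ coming from $\Sigma^\dagger_{t,a}\neq \Sigma_{t,a}^{-1}$: here I invoke Lemma~\ref{cor:neu2020} (the MGR approximation bound), which says the expected bias per round decays geometrically in $M_t$; with $M_t = 2\beta_t - 1$ growing and $\delta\lambda_{\min}$ bounded away from $0$, the cumulative bias is $O(d)$ or smaller after summing, contributing the $\tfrac{K\log T}{\beta_1}\cdot d$ piece (the $d$ enters through the dimension of the matrix being inverted / the trace bound on $\Sigma^\dagger_{t,a}$). (iii) The second-moment term $\mathbb{E}[\widehat{\ell}_t(a,x)^2\mid\mathcal F_{t-1}]$ in the stability bound must be shown to be $O(d/(\lambda_{\min}\pi_t(a\mid X_t)))$-ish using $\|\Sigma^\dagger_{t,a}\|\lesssim 1/(\delta\lambda_{\min})$ and $\|x\|\leq C_{\mathcal X}$; multiplied by $q_t(a\mid x)$ and summed over $a$ this yields the $\tfrac{K}{\delta\lambda_{\min}}$-type factor, and the $\pi_t \geq \tfrac{\gamma_t}{K}$ lower bound from the exploration term keeps it finite.

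Assembling: after conditioning on $x$, adding the FTRL penalty+stability bound, the exploration cost, the MGR bias, and then taking expectation over $X_0\sim\mathcal D$ and over the internal randomness, every term is seen to carry a common factor $\sqrt{\log(KT)}\max\{\overline Q^{1/2}(a^*),1\}$, with coefficient $\tfrac{K\log T}{\beta_1}\big(\tfrac{\log T}{\delta\lambda_{\min}\log K}+d\big) + \beta_1\sqrt{\log K}$, which is exactly the claimed bound. I expect the main obstacle to be step (iii) combined with the entropy-adaptive learning rate: one has to verify that the FTRL stability analysis still telescopes cleanly when $\beta_t$ is updated by the data-dependent rule \eqref{eq:entropy2} (this needs the standard "self-confident" argument that $\sum_t (\beta_{t+1}-\beta_t)$ telescopes and that $1/\beta_t$ is summable against $\sqrt{\sum H}$), and simultaneously that the stochastic loss estimator's variance — which is where $d$, $\lambda_{\min}$, and $\gamma_t$ all interact — is bounded uniformly enough that the expectation of the local-norm term is finite and of the stated order. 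Carefully bookkeeping the interplay between $\gamma_t$ (which must be large enough to control variance but small enough not to dominate the regret) and $M_t$ (large enough to kill the bias) is the delicate part; the rest is the now-standard BoBW self-bounding wrap-up, deferred to the combination of Theorem~\ref{thm:regret_bound} with the self-bounding constraint in Definition~\ref{def:bounding}.
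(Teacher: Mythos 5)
Your plan follows essentially the same route as the paper: the FTRL decomposition into penalty, stability, exploration, and bias terms (Lemma~\ref{lem:basic}); the local-norm stability bound via Proposition~\ref{prp:lem8}; the entropy-adaptive learning-rate telescoping that turns both $\sum_t 1/\beta_t$ and $\sum_t(\beta_{t+1}-\beta_t)H(q_{t+1})$ into $\sqrt{\sum_t H(q_t)}$ (Lemma~\ref{lem:regret_basic_bounds}); and the conversion $\sum_t H(q_t(x)) \leq \log(KT)\max\{e, Q(a^*\mid x)\}$ from Lemma~\ref{lem:bound_H}. One bookkeeping error is worth correcting, though: you attribute the $\frac{K\log T}{\beta_1}\cdot d$ summand to the MGR bias, claiming the cumulative bias is ``$O(d)$ after summing.'' In the paper the MGR bias is far smaller --- Lemma~\ref{cor:neu2020} gives a per-round bias of $C_{\mathcal{X}}C_\Theta/T$, so its total contribution is an additive constant, not a $d$-dependent term multiplying $\sqrt{\log(KT)}\,\overline{Q}^{1/2}$. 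The $d$ actually enters through the stability term via Proposition~\ref{prp:lemma6_neu}, which bounds $\mathbb{E}\big[\sum_a \pi_t(a\mid X_0)\widehat{\ell}_t^2(a,X_0)\mid\mathcal{F}_{t-1}\big] \leq 3Kd$ with no $\lambda_{\min}$ dependence; summing $3Kd/\beta_t$ over $t$ against the adaptive schedule produces $\frac{Kd\log T}{\beta_1}\sqrt{\cdots}$. Conversely, $\lambda_{\min}^{-1}$ enters only through the exploration rate $\gamma_t = \frac{K\log T}{2\delta\lambda_{\min}\beta_t}$, not through the second-moment bound as your step (iii) suggests. Since your final assembled coefficient is nonetheless the correct one and the overall architecture matches the paper's, this is a misattribution within a sound plan rather than a fatal gap, but carrying out the proof as literally written in steps (ii)--(iii) would not produce the stated bound without this correction.
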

For each situation, such as adversarial environments and linear contextual adversarial regimes with a self-bounding constraint, we derive a specific upper bound.

First, from $\overline{Q}(a^*) \leq T$, the following regret bound holds without any assumptions on the loss; that is, it holds for an adversarial environment. 
\begin{corollary}
    Assume the same conditions in Theorem~\ref{thm:regret_bound}. Then, under an adversarial environment, the regret satisfies 
    \[R_T = O\left(\left( \frac{K\log(T)}{\beta_1}\left(\frac{\log(T)}{ \delta \lambda_{\min}\log(K)} + d\right) + \beta_1\sqrt{\log(K)}\right)\sqrt{\log(KT)}\sqrt{T}\right);\] that is, from $\beta_1 = \omega \sqrt{ {K\log(T)}\left(\frac{\log(T)}{ \delta \lambda_{\min}\log(K)} + d\right)}$, 
    \[R_T = {O}\left(\log(KT)\sqrt{ {K\log(K)T\log(T)}\left(\frac{\log(T)}{ \delta \lambda_{\min}\log(K)} + d\right)}\right)\] holds.
\end{corollary}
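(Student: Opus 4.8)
The plan is to derive the theorem's general bound first—this is the substantive part—and then the corollary is a short specialization. For the general bound, I would follow the standard FTRL-with-self-bounding recipe, but adapted to the contextual-linear setting where the loss estimators $\widehat{\ell}_t(x)$ are built through Matrix Geometric Resampling. The first step is to decompose the regret against $a^*$ into three pieces: (i) a bias term coming from the fact that $\widehat{\Sigma}^\dagger_{t,a}$ only approximates $\Sigma_{t,a}^{-1}$ with finite $M_t$—this is where Lemma~\ref{cor:neu2020} enters, and the choice $M_t = 2\beta_t - 1$ together with $\delta = \tfrac{1}{2C_\ell C_\mathcal{X}}$ should make the per-round bias geometrically small, contributing only lower-order terms; (ii) an exploration term from mixing in $\tfrac{\gamma_t}{K}\iota$, which is controlled by $\sum_t \gamma_t$ and with $\gamma_t = \tfrac{K}{2\delta\lambda_{\min}\beta_t}\log T$ contributes roughly $O\!\left(\tfrac{K\log(T)}{\beta_1\delta\lambda_{\min}}\cdot(\text{something logarithmic})\right)$; and (iii) the genuine FTRL regret of the exponential-weight iterates $q_t$ against the comparator $a^*(X_0)$.

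The core is piece (iii). Using the standard FTRL analysis with the negative-entropy regularizer $\psi_t = -\beta_t H$ and the entropy-adaptive learning-rate schedule \eqref{eq:entropy2}, I would get a bound of the shape
\begin{align*}
\sum_{t=1}^T \E\!\left[\langle \widehat{\ell}_t(X_0), q_t(X_0) - e_{a^*(X_0)}\rangle\right] \le O\!\left(\beta_{T+1}\log K + \sum_{t=1}^T \frac{\text{(stability term)}_t}{\beta_t}\right),
\end{align*}
where the stability term per round involves $\E[H(q_t(X_t))]$ times the second-moment magnitude of the estimated losses. The key point of the adaptive schedule is that $\beta_{T+1} \approx \beta_1\bigl(1 + (\log K)^{-1}\sum_t H(q_t(X_t))\bigr)^{1/2}$, so both the penalty term and the accumulated stability term become proportional to $\bigl(\log K + \sum_t H(q_t(X_t))\bigr)^{1/2}$—up to the factors $\tfrac{K\log T}{\beta_1}(\tfrac{\log T}{\delta\lambda_{\min}\log K}+d)$ and $\beta_1\sqrt{\log K}$ appearing in the statement, which come respectively from the variance bound on $\widehat{\ell}_t$ (the $K$, the $d$, and the $\tfrac{\log T}{\delta\lambda_{\min}}$ from $M_t$ and $\gamma_t$) and from the regularizer scale. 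Then I would relate $\sum_t \E[H(q_t(X_t))]$ to $\overline{Q}(a^*)$: entropy is bounded by a logarithmic function of $1 - q_t(a^*\mid X_t)$ near the comparator, and a short argument (concavity / the inequality $H(q) \le O(1)\cdot(1-q(a^*))\log\frac{eK}{1-q(a^*)}$ together with Jensen) gives $\sum_t\E[H(q_t(X_t))] \le \tilde O(\overline{Q}(a^*))$ up to logarithmic factors and a constant, yielding the $\max\{\overline Q^{1/2}(a^*),1\}$ dependence. Finally, since $\widehat\ell_t$ is (nearly) unbiased, $\E[\langle\widehat\ell_t(X_0),\cdot\rangle]$ matches the true losses up to the bias term from (i), closing the argument.

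The corollary then follows by two elementary substitutions. First, $\overline{Q}(a^*) = \E[Q(a^*\mid X_0)] \le T$ always holds since each summand $1 - \pi_t(a^*\mid X_0) \in [0,1]$, so $\max\{\overline Q^{1/2}(a^*),1\} \le \sqrt T$, giving the first displayed bound; note this needs no self-bounding assumption, hence holds for a fully adversarial environment. Second, plug in the prescribed $\beta_1 = \omega\sqrt{K\log(T)\bigl(\tfrac{\log(T)}{\delta\lambda_{\min}\log K}+d\bigr)}$: this value is exactly the one that balances the two terms $\tfrac{K\log T}{\beta_1}(\tfrac{\log T}{\delta\lambda_{\min}\log K}+d)$ and $\beta_1\sqrt{\log K}$, since then both equal (up to the $\omega$ constant) $\sqrt{K\log(K)\log(T)\bigl(\tfrac{\log T}{\delta\lambda_{\min}\log K}+d\bigr)}$. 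Substituting and collecting the leftover $\sqrt{\log(KT)}\sqrt T$ factor yields $R_T = O\bigl(\log(KT)\sqrt{K\log(K)T\log(T)(\tfrac{\log T}{\delta\lambda_{\min}\log K}+d)}\bigr)$, as claimed.

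The main obstacle I anticipate is step (iii)'s stability/variance control: bounding the FTRL stability term for the MGR-based estimator requires a careful second-moment estimate of $\widehat{\ell}_t(a,x) = \langle x, \widehat\Sigma^\dagger_{t,a}\ind[A_t=a]X_t\ell(A_t,X_t)\rangle$ under $\pi_t$, and one must show this is $O\!\left(\tfrac{1}{\pi_t(a\mid x)}\cdot\text{poly}(d,\text{conditioning})\right)$ while simultaneously tracking that the finite-$M_t$ truncation does not blow up the variance. The interplay between the forced-exploration floor $\gamma_t/K$ (which lower-bounds $\pi_t(a\mid x)$ and hence keeps $\widehat\Sigma^\dagger_{t,a}$ well-behaved), the choice of $\delta$ (which keeps $I - \delta W_{k,a}$ a contraction in expectation), and the adaptive $\beta_t$ must all be matched precisely to produce the stated coefficients; this bookkeeping, rather than any single conceptual difficulty, is the crux.
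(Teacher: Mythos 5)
Your argument for the corollary itself is exactly the paper's: bound $\overline{Q}(a^*)\leq T$ termwise (each summand $1-\pi_t(a^*\mid X_0)\in[0,1]$), plug $\max\{\overline{Q}^{1/2}(a^*),1\}\leq\sqrt{T}$ into Theorem~\ref{thm:regret_bound}, and substitute the prescribed $\beta_1$ to balance the two coefficient terms. Your longer sketch of the general bound also mirrors the paper's actual proof route (FTRL decomposition, MGR bias control, stability via the entropy-adaptive $\beta_t$, and the entropy-versus-$Q$ lemma), so the proposal is correct and essentially identical in approach.
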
 

Furthermore, we derive a regret bound under the linear contextual adversarial regime with a self-bounding constraint. 
\begin{corollary}[Regret bounds under the linear contextual adversarial regime with a self-bounding constraint]
\label{cor:adversarial_regime}
Suppose that the same conditions in Theorem~\ref{thm:regret_bound} hold. Then, under the contextual adversarial regime with self-bounding constraints, the regret satisfies
    \begin{align*}
        R_T &= O\Bigg(\left\{ \frac{K\log(T)}{\beta_1}\left(\frac{\log(T)}{ \delta \lambda_{\min}\log(K)} + d\right) + \beta_1\sqrt{\log(K)}\right\}^2\log(KT)/ \Delta_{*}\\
        &\ \ \ \ \ \ \ \ \ \ \ \ \ \ \ \ \ \ \ \ \ \ \ \ \ \ \ \ \ \ + \sqrt{{C\left\{\frac{K\log(T)}{\beta_1}\left(\frac{\log(T)}{ \delta \lambda_{\min}\log(K)} + d\right) + \beta_1\sqrt{\log(K)}\right\}^2}\log(KT)/{\Delta_{*}}}\Bigg);
    \end{align*} that is, from $\beta_1 = \omega \sqrt{ {K\log(T)}\left(\frac{\log(T)}{ \delta \lambda_{\min}\log(K)} + d\right)}$, \[R_T = O\left(\frac{D}{\Delta_{*}} + \sqrt{\frac{CD}{\Delta_{*}}}\right)\] holds, where
    \begin{align*}
        D =  {K\log(K)\log(T)}\left(\frac{\log(T)}{ \delta \lambda_{\min}\log(K)} + d\right)\log(KT).
    \end{align*}
\end{corollary}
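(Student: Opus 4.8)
\textbf{Proof proposal for Corollary~\ref{cor:adversarial_regime}.}

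The plan is to combine the general $\overline{Q}(a^*)$-dependent bound of Theorem~\ref{thm:regret_bound} with the self-bounding lower bound of Definition~\ref{def:bounding} via the standard self-bounding trick. First I would write the conclusion of Theorem~\ref{thm:regret_bound} compactly as $R_T \le c_1 \sqrt{\overline{Q}(a^*)}$ whenever $\overline{Q}(a^*) \ge 1$, where $c_1 := O\big(( \tfrac{K\log T}{\beta_1}(\tfrac{\log T}{\delta\lambda_{\min}\log K}+d) + \beta_1\sqrt{\log K})\sqrt{\log(KT)}\big)$; the case $\overline{Q}(a^*) < 1$ is absorbed into the additive constant and is harmless. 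Observe that $\overline{Q}(a^*) = \mathbb{E}\big[\sum_{t=1}^T (1 - \pi_t(a^*_T(X_0)\mid X_0))\big]$, which is exactly the quantity appearing in the self-bounding constraint, so Definition~\ref{def:bounding} gives $R_T \ge \Delta_* \overline{Q}(a^*) - C$, i.e. $\overline{Q}(a^*) \le (R_T + C)/\Delta_*$.

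Next I would substitute this into $R_T \le c_1\sqrt{\overline{Q}(a^*)}$ to get the self-referential inequality $R_T \le c_1 \sqrt{(R_T+C)/\Delta_*}$. Squaring yields $R_T^2 \le c_1^2 (R_T + C)/\Delta_*$, a quadratic in $R_T$; solving it (or, more cleanly, using the elementary fact that $x \le a\sqrt{x+b}$ implies $x \le a^2 + a\sqrt{b}$ up to constants, since $x^2 \le a^2 x + a^2 b$ gives $x \le a^2 + \sqrt{a^2 b}$ after bounding $x \le a^2 + x/2 + a^2b/(2a^2)\cdot\dots$) produces $R_T = O\big(c_1^2/\Delta_* + \sqrt{c_1^2 C/\Delta_*}\big)$. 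Writing $c_1^2 = O\big((\tfrac{K\log T}{\beta_1}(\tfrac{\log T}{\delta\lambda_{\min}\log K}+d) + \beta_1\sqrt{\log K})^2 \log(KT)\big)$ gives the first displayed bound in the statement. Then plugging in the prescribed choice $\beta_1 = \omega\sqrt{K\log T\,(\tfrac{\log T}{\delta\lambda_{\min}\log K}+d)}$ balances the two terms inside the square — $\tfrac{K\log T}{\beta_1}(\tfrac{\log T}{\delta\lambda_{\min}\log K}+d)$ and $\beta_1\sqrt{\log K}$ are then of the same order $\sqrt{\log K}\cdot\omega\sqrt{K\log T(\tfrac{\log T}{\delta\lambda_{\min}\log K}+d)}$ up to the $\omega$ factor — so $c_1^2 = O\big(\log K \cdot K\log T(\tfrac{\log T}{\delta\lambda_{\min}\log K}+d)\log(KT)\big) = O(D)$, yielding $R_T = O(D/\Delta_* + \sqrt{CD/\Delta_*})$.

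The routine but slightly delicate step is the algebra of resolving the self-referential inequality — one must be careful that squaring is valid (both sides nonnegative) and that the resulting bound is stated with the right constants; this is entirely standard (cf.\ \citet{Zimmert2019,wei2018more}) and poses no real obstacle. The only genuine content beyond Theorem~\ref{thm:regret_bound} is recognizing that $\overline{Q}(a^*)$ is precisely the term constrained by Definition~\ref{def:bounding}, which is immediate from the definitions. I do not anticipate a hard part here; the corollary is a clean consequence of the theorem plus the self-bounding mechanism, and the $\beta_1$-optimization is a one-line substitution.
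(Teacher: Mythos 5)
Your proposal is correct and rests on the same two ingredients as the paper: the $\overline{Q}(a^*)$-dependent upper bound (Theorem~\ref{thm:regret_bound}, itself obtained from Lemma~\ref{lem:regret_basic_bounds} and Lemma~\ref{lem:bound_H}) and the lower bound $R_T \ge \Delta_*\overline{Q}(a^*) - C$ from Definition~\ref{def:bounding}. The only real difference is how the self-referential inequality is resolved. You substitute $\overline{Q}(a^*) \le (R_T+C)/\Delta_*$ into $R_T \le c_1\sqrt{\overline{Q}(a^*)}$ and solve the resulting quadratic $R_T^2 \le \frac{c_1^2}{\Delta_*}R_T + \frac{c_1^2 C}{\Delta_*}$ directly (via $\sqrt{a^2+b}\le a+\sqrt{b}$), whereas the paper uses the standard weighted decomposition $R_T = (1+\lambda)R_T - \lambda R_T$, replaces the subtracted copy by the self-bounding lower bound, applies AM--GM, and then optimizes over the free parameter $\lambda$. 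The two are algebraically equivalent and yield the same $O(c_1^2/\Delta_* + \sqrt{c_1^2 C/\Delta_*})$ conclusion; your version is arguably cleaner since it avoids introducing and tuning $\lambda$ (and avoids the minor typos in the paper's AM--GM step), while the paper's $(1+\lambda)$ form is the presentation conventional in the best-of-both-worlds literature following \citet{Zimmert2019}. Your handling of the edge cases (nonnegativity before squaring, $\overline{Q}(a^*)<1$ absorbed into lower-order terms) and the final substitution of $\beta_1$ to balance the two terms in $c_1$ and obtain $c_1^2\log(KT)^0 = O(D/\log(KT))$, hence $R_T = O(D/\Delta_* + \sqrt{CD/\Delta_*})$, all match the intended argument.
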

The result in Corollary~\ref{cor:adversarial_regime} implies $R_T = O\left(\frac{(\log(T))^3}{\Delta_{*}} + \sqrt{\frac{C(\log(T))^3}{\Delta_{*}}}\right)$.
\begin{proof}
    From the definition of the contextual adversarial regime with a self-bounding constraint, we have
    \begin{align*}
        R_T \geq \Delta_* \cdot \mathbb{E}\left[\sum^T_{t=1}\Big( 1 - \pi_t(a^*(X_0)\mid X_0) \Big)\right] - C =  \Delta_* \cdot \overline{Q}(a^*) - C.
    \end{align*}
    Therefore, from Lemma~\ref{lem:regret_basic_bounds}, for any $\lambda > 0$, we have
    \begin{align*}
        &R_T = (1+\lambda)R_T - \lambda R_T\\
        &= (1+\lambda)O\left(c\sqrt{\log(KT)}\sqrt{\sum^T_{t=1}\mathbb{E}\left[H(q_t(X_0))\right]}\right) - \lambda R_T\\
        &\leq (1+\lambda)O\left(c\sqrt{\log(KT)}\sqrt{\sum^T_{t=1}\mathbb{E}\left[H(q_t(X_0))\right]}\right) - \lambda \Delta_* \cdot \overline{Q}(a^*) + \lambda C,
    \end{align*}
    where 
    \[c = \left( \frac{K\log(T)}{\beta_1}\left(\frac{\log(T)}{ \delta \lambda_{\min}\log(K)} + d\right) + \beta_1\sqrt{\log(K)}\right).\]
    Here, as well as the proof of Theorem~\ref{thm:regret_bound}, from Lemma~\ref{lem:bound_H}, if $Q(a^*\mid x) \leq e$, we have $\sum^T_{t=1}H(q_t(x)) \leq e\log(KT)$ and otherwise, we have $\sum^T_{t=1}H(q_t(x)) \leq Q(a^*\mid x)\log(KT)$. Hence, we have $\sum^T_{t=1}H(q_t(x))  \leq \log(KT)\max\{e, Q(a^*\mid x)\}$. Here, to upper bound $R_T$, it is enough to only consider a case with $Q(a^*\mid x) \geq e$, and we obtain
    \begin{align*}
        &R_T \leq (1+\lambda)O\left(c\sqrt{\log(KT)}\sqrt{\overline{Q}(a^*)\log(KT)}\right) - \lambda \Delta_* \cdot \overline{Q}(a^*) + \lambda C\leq \frac{O\left(\Big\{(1+\lambda)c\Big\}^2\sqrt{\log(KT)}\right)}{2\lambda\Delta_*} + \lambda \Delta_*.
    \end{align*}
    where the second inequality follows from $a\sqrt{b} - \frac{c}{2}b \leq \frac{a^2}{c^2}$ holds for any $a,b,c > 0$. By choosing
    \[\lambda = \sqrt{\frac{c^2\log(KT)}{\Delta_*}\Big/ \left(\frac{c^2\log(KT)}{\Delta_*} + 2C\right)}.\]
    Then, we obtain $R_T = O\Bigg(c^2\log(KT)/ \Delta_{*} + \sqrt{{Cc^2}\log(KT)/{\Delta_{*}}}\Bigg)$. 
\end{proof}


In the following sections, we show the proof procedure of Theorem~\ref{thm:regret_bound}.

\subsection{Preliminaries for the Proof of Theorem~\ref{thm:regret_bound}}
\label{sec:proof1}
Let $X_0$ be a sample from the context distribution $\mathcal{D}$ independent of $\mathcal{F}_T$. Let $D_t(p, q)$ denote the Bregman divergence of $p. q\in\Pi$ with respect to $\psi_t$; that is,
\begin{align*}
    D_t(p, q) = \psi_t(p) - \psi_t(q) - \Big\langle \nabla \psi_t(q), p - q \Big\rangle.
\end{align*}
Let us define $\pi^*\in \Pi$ as $\pi^*(a^*(x)\mid x) = 1$ and $\pi^*(a\mid x) = 0$ for all $a\in[K]\backslash \{a^*(x)\}$. 

Then, the following lemma holds. The proof is shown in Appendix~\ref{appdx:lem_baisc}
\begin{lemma}
\label{lem:basic}
    If $A_t$ is chosen as our proposed method, the regret is bounded by
    \begin{align*}
        R_T &\leq \mathbb{E}\Bigg[\sum^T_{t=1}\Big\{\gamma_t + \left\langle \widehat{\ell}_t(X_0), q_t(X_0) - q_{t+1}(X_0) \right\rangle\\
        &\ \ \ \ \ \ \ \ \ \ \ \ \ \ \ \ \ \ \ \ - D_t(q_{t+1}(X_0), q_t(X_0)) + \psi_t(q_{t+1}(X_0)) - \psi_{t+1}(q_{t+1}(X_0))\Big\}\\
        &\ \ \ \ \ \ \ \ \ \ \ \ \ \ \ \ \ \ \ \ + \psi_{T+1}(\pi^*(X_0)) - \psi_1(q_1(X_0))\Bigg]  + 2\sum^T_{t=1}\max_{a\in[K]}\left| \mathbb{E}\left[\langle X_t, \theta_{t}(a) - \widehat{\theta}_{t}(a) \rangle\right] \right|.
    \end{align*}
\end{lemma}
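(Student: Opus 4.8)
The plan is to start from the standard FTRL regret identity, applied pointwise in the context $x$, and then take expectations over $X_0 \sim \mathcal D$. Recall that the regret is bounded (after the reduction shown earlier) by $R_T \le \mathbb E\big[\sum_t \langle X_t, \theta_t(A_t) - \theta_t(a^*_T(X_t))\rangle\big]$. First I would decompose each summand by inserting the loss estimators: for a fixed $x$,
\begin{align*}
\sum_{t=1}^T \big\langle \widehat\ell_t(x),\, q_t(x) - \pi^*(x)\big\rangle
\end{align*}
is exactly the quantity that the FTRL machinery controls, so the key is to relate the true per-round regret to this ``estimated regret'' plus (i) a bias term from $\widehat\theta_t \ne \theta_t$ on expectation, and (ii) the gap between the played policy $\pi_t = (1-\gamma_t) q_t + \frac{\gamma_t}{K}\iota$ and the FTRL iterate $q_t$, which contributes the $\gamma_t$ term since $\|\pi_t(\cdot\mid x) - q_t(\cdot\mid x)\|_1 \le 2\gamma_t$ and losses are bounded. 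Because $\widehat\ell_t$ is built from $\widehat\theta_t$ whose conditional expectation differs from $\theta_t$ only through the MGR approximation and the non-stationarity of $\theta_t$, the discrepancy $|\mathbb E[\langle X_t, \theta_t(a) - \widehat\theta_t(a)\rangle]|$ is precisely the bias term that appears as $2\sum_t \max_a |\mathbb E[\langle X_t, \theta_t(a) - \widehat\theta_t(a)\rangle]|$ in the statement; at this stage I would just name it and defer its control to a later lemma (it is handled via Lemma~\ref{cor:neu2020}).

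Next, for the estimated-regret part I would invoke the textbook FTRL / ``be-the-leader'' telescoping argument with time-varying regularizers $\psi_t$. Writing $q_{t+1}(x) = \argmin_{q}\{\sum_{s=1}^t \langle \widehat\ell_s(x), q\rangle + \psi_{t+1}(q)\}$, the standard bound gives, for each fixed $x$,
\begin{align*}
\sum_{t=1}^T \big\langle \widehat\ell_t(x),\, q_t(x) - \pi^*(x)\big\rangle
&\le \sum_{t=1}^T \Big\{ \big\langle \widehat\ell_t(x),\, q_t(x) - q_{t+1}(x)\big\rangle - D_t(q_{t+1}(x), q_t(x))\Big\}\\
&\quad + \sum_{t=1}^T \big(\psi_t(q_{t+1}(x)) - \psi_{t+1}(q_{t+1}(x))\big) + \psi_{T+1}(\pi^*(x)) - \psi_1(q_1(x)),
\end{align*}
where the two ``stability'' terms $\langle \widehat\ell_t, q_t - q_{t+1}\rangle - D_t(q_{t+1}, q_t)$ come from comparing consecutive iterates, and the $\psi_t - \psi_{t+1}$ terms arise from the increasing learning-rate schedule \eqref{eq:entropy2}. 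This is a direct consequence of the optimality conditions defining the $q_t$ and convexity of $\psi_t$; I would spell out the one-line inductive ``follow-the-leader beats the leader'' step and the rearrangement, but not more. Adding the exploration term $\sum_t \gamma_t$ (from the $\pi_t$-vs-$q_t$ gap) and the bias term, then taking $\mathbb E$ over $X_0$ and using that $X_t$ has the same law as $X_0$, yields exactly the claimed inequality.

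\textbf{Main obstacle.} The genuinely delicate point is the very first reduction: justifying that one may replace the played arm $A_t$ and the true, possibly adversarial and non-stationary, loss vector $\ell_t(\cdot, X_t)$ by the inner products $\langle \widehat\ell_t(X_0), \cdot\rangle$ with $X_0$ an \emph{independent} copy of the context, all inside a single expectation. This requires carefully conditioning on $\mathcal F_{t-1}$, using that $\widehat\theta_t(a)$ is $\mathcal F_{t-1}$-conditionally ``almost unbiased'' for $\theta_t(a)$ (the MGR construction gives $\mathbb E[\widehat\Sigma^\dagger_{t,a}\mid\mathcal F_{t-1}] \approx \Sigma_{t,a}^{-1}$, cf.\ the displayed computation in Section~\ref{sec:algorithm}), and that the comparator $\pi^*$ and the FTRL iterate $q_t$ are determined from quantities measurable in a way compatible with this swap. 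Getting the independence bookkeeping right — so that the leftover discrepancy is \emph{exactly} the stated bias term and nothing else — is where the real care is needed; the rest is the routine FTRL telescope. I would therefore structure the proof as: (1) conditional-expectation reduction to estimated losses at an independent context $X_0$, isolating the bias term; (2) exploration-gap term $\sum_t \gamma_t$; (3) the FTRL telescoping identity with time-varying $\psi_t$; then combine.
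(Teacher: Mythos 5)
Your proposal is correct and follows essentially the same route as the paper: decompose $\pi_t=(1-\gamma_t)q_t+\tfrac{\gamma_t}{K}\iota$ to extract the $\sum_t\gamma_t$ exploration term, swap $X_t$ for the independent copy $X_0$ and replace $\ell_t$ by $\widehat\ell_t$ at the cost of the bias term $2\sum_t\max_a\bigl|\mathbb{E}[\langle X_t,\theta_t(a)-\widehat\theta_t(a)\rangle]\bigr|$, and then apply the standard FTRL/be-the-leader telescoping with time-varying $\psi_t$ to bound $\sum_t\langle\widehat\ell_t(x),q_t(x)-\pi^*(x)\rangle$ by the stability and penalty terms. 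This matches the paper's argument step for step, up to constants absorbed in the $\gamma_t$ term.
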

To show Lemma~\ref{lem:basic}, we use the following proposition from \citet{Neu2020}.
\begin{proposition}
Suppose that $\pi_t\in\mathcal{F}_{t-1}$ and that $\mathbb{E}\left[\widehat{\theta}_{t, a}|\mathcal{F}_{t-1}\right] = \theta_{t, a}$ for all $t, a$ hold. Then, the following holds:
    \begin{align*}
        \mathbb{E}\left[\sum^T_{t=1}\sum_{a\in[K]}\Big(\pi_t(a\mid X_t) - \pi^*(a\mid X_t)\Big)\Big\langle X_t, \theta_{t,a}\Big\rangle\right] = \mathbb{E}\left[\sum^T_{t=1}\sum_{a\in[K]}\Big(\pi_t(a\mid X_0) - \pi^*(a\mid X_0)\Big)\Big\langle X, \widehat{\theta}_{t,a}\Big\rangle\right].
    \end{align*}
\end{proposition}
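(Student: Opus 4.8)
The plan is to prove the identity term by term in $t$ and, for each fixed $t$, to equate the two sides after conditioning on the history $\mathcal{F}_{t-1}$. By linearity of expectation it suffices to show, for every $t$, that
\[
\mathbb{E}\left[\sum_{a}\big(\pi_t(a\mid X_t)-\pi^*(a\mid X_t)\big)\langle X_t,\theta_{t,a}\rangle\right]
=\mathbb{E}\left[\sum_{a}\big(\pi_t(a\mid X_0)-\pi^*(a\mid X_0)\big)\langle X_0,\widehat{\theta}_{t,a}\rangle\right],
\]
and I will establish this by computing the conditional expectation given $\mathcal{F}_{t-1}$ of each side separately, the tower property then recovering the unconditional identity. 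The structural facts I will use are: (i) $\pi_t$ is $\mathcal{F}_{t-1}$-measurable by hypothesis, and $\theta_{t,a}$ is $\mathcal{F}_{t-1}$-measurable since the environment fixes $\theta_t(a)$ from $\mathcal{F}_{t-1}$; (ii) $\pi^*$ is a deterministic function of $x$, because $a^*$ is defined through the fixed quantities $\mathbb{E}[\theta_t(a)]$; (iii) $X_t$ and $X_0$ are i.i.d.\ from $\mathcal{D}$ and independent of $\mathcal{F}_{t-1}$, and $X_0$ is moreover independent of the fresh round-$t$ randomness that enters $\widehat{\theta}_{t,a}$.

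For the left-hand side I condition on $\mathcal{F}_{t-1}$. Given $\mathcal{F}_{t-1}$ the function $\pi_t(\cdot\mid\cdot)$ and the vectors $\theta_{t,a}$ are fixed, and $\pi^*$ is deterministic, so the only remaining randomness is $X_t\sim\mathcal{D}$. Since $X_t$ and $X_0$ share the law $\mathcal{D}$ and are both independent of $\mathcal{F}_{t-1}$, the conditional expectation of any fixed measurable function of the context is unchanged when $X_t$ is replaced by $X_0$; hence
\[
\mathbb{E}\Big[\textstyle\sum_{a}\big(\pi_t(a\mid X_t)-\pi^*(a\mid X_t)\big)\langle X_t,\theta_{t,a}\rangle \,\Big|\, \mathcal{F}_{t-1}\Big]
=\sum_{a}\mathbb{E}_{X_0}\big[\big(\pi_t(a\mid X_0)-\pi^*(a\mid X_0)\big)\langle X_0,\theta_{t,a}\rangle\big].
\]

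For the right-hand side I again condition on $\mathcal{F}_{t-1}$. Writing $g_a(X_0):=\big(\pi_t(a\mid X_0)-\pi^*(a\mid X_0)\big)X_0$, the summand is $g_a(X_0)^\top\widehat{\theta}_{t,a}$. Given $\mathcal{F}_{t-1}$, the copy $X_0$ is independent of the round-$t$ data, hence of $\widehat{\theta}_{t,a}$; expanding the inner product coordinatewise and invoking this conditional independence lets me factor the expectation as $\mathbb{E}[g_a(X_0)\mid\mathcal{F}_{t-1}]^\top\,\mathbb{E}[\widehat{\theta}_{t,a}\mid\mathcal{F}_{t-1}]$. The unbiasedness hypothesis $\mathbb{E}[\widehat{\theta}_{t,a}\mid\mathcal{F}_{t-1}]=\theta_{t,a}$ replaces the estimator by the true parameter, while $\mathbb{E}[g_a(X_0)\mid\mathcal{F}_{t-1}]=\mathbb{E}_{X_0}[(\pi_t(a\mid X_0)-\pi^*(a\mid X_0))X_0]$ because $\pi_t$ is $\mathcal{F}_{t-1}$-measurable and $X_0$ is independent of $\mathcal{F}_{t-1}$. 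Reassembling the inner product reproduces exactly the conditional expression obtained for the left-hand side. Taking outer expectations over $\mathcal{F}_{t-1}$ and summing over $t$ then completes the proof.

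The step I expect to require the most care is the factorization on the right-hand side, which rests on the precise claim that $X_0$ is conditionally independent of $\widehat{\theta}_{t,a}$ given $\mathcal{F}_{t-1}$. Verifying this means tracking every source of randomness feeding into $\widehat{\theta}_{t,a}$ — not only the realized $A_t$ and $X_t$, but also the independent resampling contexts drawn inside Matrix Geometric Resampling — and confirming that the auxiliary copy $X_0$ is generated independently of all of them. Once this independence is pinned down, the coordinatewise factorization and the application of the unbiasedness hypothesis are routine.
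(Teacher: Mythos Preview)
Your argument is correct and is essentially the standard proof of this identity: the paper itself does not prove the proposition but cites it from \citet{Neu2020}, and the approach there is the same tower-property computation you outline---condition on $\mathcal{F}_{t-1}$, use that $\pi_t$ and $\theta_{t,a}$ are $\mathcal{F}_{t-1}$-measurable together with $X_t\stackrel{d}{=}X_0$ independent of $\mathcal{F}_{t-1}$ for the left side, and use conditional independence of the ghost sample $X_0$ from $\widehat{\theta}_{t,a}$ plus the unbiasedness hypothesis for the right side. Your caveat about tracking the MGR randomness is well placed; in the paper $X_0$ is introduced as ``a sample from the context distribution $\mathcal{D}$ independent of $\mathcal{F}_T$,'' and is implicitly understood to be independent of all auxiliary randomness as well, which is exactly what your factorization step needs.
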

This proposition plays an important role throughout this study. 

\paragraph{Bounding the stability term.}
For the stability term $\left\langle \widehat{\ell}_t(X_0), q_t(X_0) - q_{t+1}(X_0) \right\rangle - D_t(q_{t+1}(X_0), q_t(X_0))$, we use the following proposition from \citet{ito2022nearly}.
\begin{proposition}[From Lemma~8 in \citet{ito2022nearly}]
\label{prp:lem8}
    If $\psi_t$ is given as \eqref{eq:entropy}, for any $\ell: \mathcal{X} \to \mathbb{R}^K$ and $p, q \in\Pi$, we have
    \begin{align*}
        &\Big\langle \ell(x), p(x) - q(x) \Big\rangle - D_t(q(x), p(x)) \leq \beta_t\sum_{a\in[K]}p(a\mid x)\xi\left(\frac{\ell(a, x)}{\beta_t}\right).
    \end{align*}
    for any $x\in\mathcal{X}$, where $\xi(x) := \exp(-x) + x - 1$. 
\end{proposition}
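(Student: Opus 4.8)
The plan is to make the regularizer fully explicit, recognize the Bregman divergence as a scaled Kullback--Leibler divergence, and then collapse the whole inequality to a single scalar statement about a log-partition function that I can dispatch with the Gibbs variational principle and the elementary bound $\log z \le z-1$. Throughout I fix $x$ and abbreviate $p_a = p(a\mid x)$, $q_a = q(a\mid x)$, $\ell_a = \ell(a,x)$, $\beta = \beta_t$, and $u_a := \ell_a/\beta$; boundary cases are covered by the usual conventions ($0\log 0 = 0$, and the claim is trivial whenever a coordinate forces $\mathrm{KL}(q\|p)=+\infty$), so I may assume $p$ lies in the relative interior of the simplex.

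First I would compute the Bregman divergence. Since $\psi_t(q) = -\beta_t H(q) = \beta \sum_a q_a \log q_a$ and $\nabla \psi_t(p)_a = \beta(\log p_a + 1)$, substituting into $D_t(q,p) = \psi_t(q) - \psi_t(p) - \langle \nabla \psi_t(p),\, q - p\rangle$ and using $\sum_a p_a = \sum_a q_a = 1$ (so the constant $+1$ and the term $\sum_a \log p_a\,(q_a-p_a)$'s additive pieces combine correctly) yields $D_t(q,p) = \beta \sum_a q_a \log(q_a/p_a) = \beta\,\mathrm{KL}(q\|p)$. Next I would reduce the target inequality: expanding the right-hand side via $\xi(u_a) = e^{-u_a} + u_a - 1$ and $\sum_a p_a = 1$ gives $\beta \sum_a p_a \xi(u_a) = \beta \sum_a p_a e^{-u_a} + \langle \ell, p\rangle - \beta$, so the common term $\langle \ell, p\rangle$ cancels against the $\langle \ell, p - q\rangle$ on the left. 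After dividing by $\beta$, the proposition is equivalent to the scalar inequality $1 - \sum_a p_a e^{-u_a} \le \sum_a q_a u_a + \mathrm{KL}(q\|p)$.

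The core step is then a change of measure to the tilted (Gibbs) distribution $\tilde p_a := p_a e^{-u_a}/Z$, where $Z := \sum_a p_a e^{-u_a}$. Writing $\log(q_a/p_a) = \log(q_a/\tilde p_a) + \log(\tilde p_a/p_a)$ and $\log(\tilde p_a/p_a) = -u_a - \log Z$, summing against $q_a$ produces the identity
\begin{align*}
    \sum_a q_a u_a + \mathrm{KL}(q\|p) = \mathrm{KL}(q\|\tilde p) - \log Z \ge -\log Z,
\end{align*}
where the inequality is just non-negativity of the KL divergence (equivalently, this is the Donsker--Varadhan/Gibbs variational lower bound). Finally, applying $\log z \le z - 1$ at $z = Z$ gives $-\log Z \ge 1 - Z = 1 - \sum_a p_a e^{-u_a}$, and chaining the two bounds establishes the reduced scalar inequality, hence the proposition. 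The only place requiring an idea rather than bookkeeping is the tilting identity in the last paragraph; once that change of measure is set up, every remaining step is an elementary one-line estimate, so I do not anticipate a genuine obstacle beyond keeping the cancellations in the reduction straight.
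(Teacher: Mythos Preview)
Your proof is correct. The paper does not supply its own argument for this proposition; it simply quotes the result from \citet{ito2022nearly}, so there is no in-paper proof to compare against. Your reduction is clean: identifying $D_t(q,p)=\beta\,\mathrm{KL}(q\|p)$, cancelling the $\langle \ell,p\rangle$ term, and then recognizing the remaining inequality $1-Z \le \sum_a q_a u_a + \mathrm{KL}(q\|p)$ as the Donsker--Varadhan identity plus $\log Z \le Z-1$ is the standard and efficient route. The only minor remark is that your boundary handling (``the claim is trivial whenever $\mathrm{KL}(q\|p)=+\infty$'') deserves one more word: if some $p_a=0$ but $q_a=0$ as well, the divergence is finite and the tilted distribution $\tilde p$ is still a valid probability vector on the support of $p$, so the argument goes through unchanged; only the case $p_a=0$, $q_a>0$ gives $+\infty$ on the right-hand side of the reduced inequality, which is indeed trivial.
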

For $\widehat{\ell}(a, x)$, if $\frac{\widehat{\ell}(a, x)}{\beta_t} \geq -1$ holds, then Proposition~\ref{prp:lem8} implies 
\begin{align*}
    &\Big\langle \widehat{\ell}_t(x), q_t(x) - q_{t+1}(x) \Big\rangle - D_t(q_{t+1}(x), q_t(x))\leq \frac{1}{\beta_t}\sum_{a\in[K]}\pi_t(a\mid x)\widehat{\ell}^2_t(a, x).
\end{align*}
For the RHS, we apply the following proposition from \citet{Neu2020}. 
\begin{proposition}[From Lemma~6 in \citet{Neu2020}]
\label{prp:lemma6_neu}
   For each $t\in[T]$, our strategy satisfies
    \begin{align*}
        \mathbb{E}\left[\sum_{a\in[K]}\pi_t(a\mid X_0)\widehat{\ell}^2_t(a, X_0)\mid \mathcal{F}_{t-1}\right] \leq 3Kd. 
    \end{align*}
\end{proposition}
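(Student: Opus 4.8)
\textbf{Proof proposal for Proposition~\ref{prp:lemma6_neu}.}

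The plan is to unfold the definitions of the loss estimator and the policy, exploit the conditional independence structure built into the Matrix Geometric Resampling construction, and then reduce the bound to a controllable quantity by using the explicit $\gamma_t/K$ lower bound on the sampling probabilities together with the positive-definiteness of $\Sigma$. First I would write $\widehat{\ell}_t(a,X_0) = \langle X_0, \widehat\theta_t(a)\rangle$ with $\widehat\theta_t(a) = \widehat\Sigma^\dagger_{t,a}\mathbbm{1}[A_t=a]X_t\ell_t(A_t,X_t)$, and expand the square, so that the left-hand side becomes $\mathbb{E}\big[\sum_a \pi_t(a\mid X_0)\, X_0^\top \widehat\theta_t(a)\widehat\theta_t(a)^\top X_0 \mid \mathcal{F}_{t-1}\big]$. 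Taking the expectation over $X_0$ first (it is independent of everything else given $\mathcal{F}_{t-1}$), and using $\mathbb{E}[X_0 X_0^\top]=\Sigma$, this is $\sum_a \mathbb{E}\big[\pi_t(a\mid X_0)\, \tr\big(\widehat\theta_t(a)\widehat\theta_t(a)^\top \,\mathbb{E}[X_0 X_0^\top \pi_t(a\mid X_0)]/\pi_t(a\mid X_0)\big)\big]$ — more cleanly, I would bound $\pi_t(a\mid X_0)\le 1$ inside and isolate $\mathbb{E}[X_0 X_0^\top]=\Sigma$, leaving $\sum_a \mathbb{E}\big[\widehat\theta_t(a)^\top \Sigma\, \widehat\theta_t(a)\mid\mathcal F_{t-1}\big]$ up to the policy weighting; the careful version keeps the $\pi_t$ weight and uses that $\Sigma_{t,a} := \mathbb{E}[\mathbbm{1}[A_t=a]X_0X_0^\top\mid\mathcal F_{t-1}]$ relates to the sampling.

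Next, the key structural point: $\widehat\Sigma^\dagger_{t,a}$ is generated by the MGR subroutine using fresh simulated draws, hence is conditionally independent of $(A_t, X_t)$ given $\mathcal{F}_{t-1}$. So $\mathbb{E}[\widehat\theta_t(a)\widehat\theta_t(a)^\top\mid\mathcal F_{t-1}] = \mathbb{E}[\widehat\Sigma^\dagger_{t,a}\, M_{t,a}\, \widehat\Sigma^\dagger_{t,a}\mid\mathcal F_{t-1}]$ where $M_{t,a} = \mathbb{E}[\mathbbm{1}[A_t=a]X_tX_t^\top \ell_t(A_t,X_t)^2\mid\mathcal F_{t-1}]$. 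Using $\ell_t^2\le C_\ell^2$ and $X_tX_t^\top \preceq C_{\mathcal X}^2 I$, we get $M_{t,a}\preceq C_\ell^2 C_{\mathcal X}^2 \,\mathbb{E}[\mathbbm{1}[A_t=a]\mid\mathcal F_{t-1}]\, I$... but that throws away too much; instead I would keep $M_{t,a}\preceq C_\ell^2\, \Sigma_{t,a}$ (since $\mathbb{E}[\mathbbm 1[A_t=a]X_tX_t^\top\ell_t^2\mid\mathcal F_{t-1}]\preceq C_\ell^2\,\mathbb{E}[\mathbbm 1[A_t=a]X_tX_t^\top\mid\mathcal F_{t-1}]$ holds as a Loewner inequality because $C_\ell^2 - \ell_t^2\ge0$ pointwise). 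Then the trace becomes bounded by $C_\ell^2\,\tr\big(\mathbb{E}[\widehat\Sigma^\dagger_{t,a}\Sigma_{t,a}\widehat\Sigma^\dagger_{t,a}\mid\mathcal F_{t-1}]\,\Sigma\big)$ (after reinserting the $\Sigma$ from the $X_0$ expectation), and now I need a second-moment bound on $\widehat\Sigma^\dagger_{t,a}$: specifically $\mathbb{E}[\widehat\Sigma^\dagger_{t,a}\Sigma_{t,a}\widehat\Sigma^\dagger_{t,a}\mid\mathcal F_{t-1}]\preceq \Sigma_{t,a}^{-1}$ or a constant multiple thereof — this is essentially the content of the MGR variance control in \citet{Neu2020}, and combined with $\Sigma_{t,a}\succeq \frac{\gamma_t}{K}\lambda_{\min}I$ (since $\pi_t(a\mid x)\ge\gamma_t/K$ for all $x$) this yields a bound of the form $C_\ell^2 C_{\mathcal X}^2 \cdot d \cdot \tfrac{K}{\gamma_t\lambda_{\min}}\cdot(\text{const})$; plugging in $\gamma_t = \frac{K\log T}{2\delta\lambda_{\min}\beta_t}$ and $\delta = \frac1{2C_\ell C_{\mathcal X}}$ and tracking constants collapses this to $3Kd$.

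The main obstacle I anticipate is the second-moment control of the resampled inverse estimator $\widehat\Sigma^\dagger_{t,a}$ — i.e., showing $\mathbb{E}[\widehat\Sigma^\dagger_{t,a}\,\Sigma\,\widehat\Sigma^\dagger_{t,a}\mid\mathcal F_{t-1}]$ is of order $\Sigma_{t,a}^{-1}$ up to constants, rather than merely that $\widehat\Sigma^\dagger_{t,a}$ is unbiased for $\Sigma_{t,a}^{-1}$ in the $M_t\to\infty$ limit. This requires expanding the product $\prod_{j=1}^k(I-\delta W_{j,a})$, using the independence of the simulated $W_{j,a}$ across $j$, the operator-norm bound $\|\delta W_{j,a}\|\le \delta C_{\mathcal X}^2 = 1/2 < 1$ guaranteeing the geometric series converges, and summing the resulting geometric-type series in the Loewner order; the finite-$M_t$ truncation error must also be shown negligible, which is exactly where the choice $M_t = 2\beta_t-1$ enters (it must be large enough that the tail $(I-\delta\Sigma_{t,a})^{M_t}$ is controlled by the spectral gap $\delta\gamma_t\lambda_{\min}/K$). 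I would handle this by citing the relevant intermediate estimates from \citet{Neu2020}'s analysis of \texttt{RealLinExp3} (their Lemma~6 is stated for essentially this estimator) and verifying that our parameter choices $\beta_t,\gamma_t,\delta,M_t$ satisfy the hypotheses needed for their bound to give the clean constant $3Kd$; the rest is bookkeeping of universal constants.
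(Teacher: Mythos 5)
The paper does not actually prove this proposition: it is imported verbatim as Lemma~6 of \citet{Neu2020}, so there is no in-paper argument to compare against. Your decision to defer the second-moment control of the MGR estimator $\widehat\Sigma^\dagger_{t,a}$ to \citet{Neu2020} is therefore consistent with what the paper itself does, and your identification of that estimate (a bound of the form $\mathbb{E}[\widehat\Sigma^\dagger_{t,a}\Sigma_{t,a}\widehat\Sigma^\dagger_{t,a}\mid\mathcal F_{t-1}]\preceq c\,\Sigma_{t,a}^{-1}$) as the crux is correct.

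However, the route you actually commit to in your final accounting has a genuine gap. By bounding $\pi_t(a\mid X_0)\le 1$, pulling out $\mathbb{E}[X_0X_0^\top]=\Sigma$, and then invoking $\Sigma_{t,a}\succeq\frac{\gamma_t}{K}\lambda_{\min}I$, you arrive (per arm) at a quantity of order $C_\ell^2\,\tr(\Sigma_{t,a}^{-1}\Sigma)\lesssim C_\ell^2C_{\mathcal X}^2 d\,\frac{K}{\gamma_t\lambda_{\min}}$, hence $\frac{K^2d}{\gamma_t\lambda_{\min}}$ after summing over $a$. Substituting $\gamma_t=\frac{K\log(T)}{2\delta\lambda_{\min}\beta_t}$ and $\delta=\frac{1}{2C_\ell C_{\mathcal X}}$ gives something of order $C_\ell C_{\mathcal X}Kd\,\beta_t/\log(T)$, which grows with $\beta_t$ and does \emph{not} collapse to $3Kd$; the dependence on $\gamma_t$ and $\lambda_{\min}$ does not cancel. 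The clean $3Kd$ bound requires the cancellation you briefly mention and then abandon: keep the weight $\pi_t(a\mid X_0)$ so that the $X_0$-expectation produces $\mathbb{E}[\pi_t(a\mid X_0)X_0X_0^\top]=\Sigma_{t,a}$, which is the \emph{same} matrix as $\mathbb{E}[\mathbbm{1}[A_t=a]X_tX_t^\top\mid\mathcal F_{t-1}]$ arising from the played action (both because $X_0,X_t\sim\mathcal D$ and $A_t\sim\pi_t(\cdot\mid X_t)$). Then the per-arm contribution is $C_\ell^2\,\mathbb{E}\bigl[\tr\bigl(\widehat\Sigma^\dagger_{t,a}\Sigma_{t,a}\widehat\Sigma^\dagger_{t,a}\Sigma_{t,a}\bigr)\mid\mathcal F_{t-1}\bigr]\le c\,\tr\bigl(\Sigma_{t,a}^{-1}\Sigma_{t,a}\bigr)=cd$, and summing over $K$ arms yields $O(Kd)$ with no $\gamma_t$ or $\lambda_{\min}$ anywhere. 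Your sketch names the right ingredients but executes the lossy version of the first step, which breaks the dimensional bookkeeping that the $3Kd$ constant depends on.
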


\paragraph{Estimation error of the design matrix.}
Next, we bound $\sum^T_{t=1}\max_{a\in[K]}\left| \mathbb{E}\left[\langle X_t, \theta_{t}(a) - \widehat{\theta}_{t}(a) \rangle\right] \right|$. An upper bound of $\sum^T_{t=1}\max_{a\in[K]}\left| \mathbb{E}\left[\langle X_t, \theta_{t}(a) - \widehat{\theta}_{t}(a) \rangle\right] \right|$ is given as the following lemma.
\begin{lemma}
    \label{cor:neu2020}
    We have $\left| \mathbb{E}\left[\langle X_t, \theta_{t}(a) - \widehat{\theta}_{t}(a) \rangle\right] \right| \leq C_{\mathcal{X}} C_{\Theta} / T$.
\end{lemma}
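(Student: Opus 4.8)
\textbf{Proof proposal for Lemma~\ref{cor:neu2020}.}

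The plan is to quantify the bias of the loss estimator that arises because the finite-$M_t$ Matrix Geometric Resampling output $\widehat{\Sigma}^\dagger_{t,a}$ is only an approximation of $\Sigma^{-1}_{t,a} = \mathbb{E}[\mathbbm{1}[A_t=a]X_0 X_0^\top \mid \mathcal{F}_{t-1}]^{-1}$. First I would recall from the discussion preceding Algorithm~\ref{alg2} that, conditionally on $\mathcal{F}_{t-1}$, the iterated products satisfy $\mathbb{E}[V_{k,a}\mid\mathcal{F}_{t-1}] = (I-\delta\Sigma_{t,a})^k$ by independence of the resampled contexts $X(k)$, so that
\begin{align*}
    \mathbb{E}\left[\widehat{\Sigma}^\dagger_{t,a}\mid\mathcal{F}_{t-1}\right] = \delta\sum_{k=0}^{M_t}(I-\delta\Sigma_{t,a})^k.
\end{align*}
Subtracting the geometric series identity $\Sigma^{-1}_{t,a} = \delta\sum_{k=0}^{\infty}(I-\delta\Sigma_{t,a})^k$ (valid because $0\preceq \delta\Sigma_{t,a}\prec I$ under the choice $\delta = 1/(2C_\ell C_{\mathcal X})$ together with the bound $\|X_0\|_2\le C_{\mathcal X}$), the bias of the matrix estimator is the tail $-\delta\sum_{k=M_t+1}^{\infty}(I-\delta\Sigma_{t,a})^k$, whose operator norm is at most $\delta\,(1-\delta\lambda_{\min})^{M_t+1}/(\delta\lambda_{\min}) = (1-\delta\lambda_{\min})^{M_t+1}/\lambda_{\min}$, using $\Sigma_{t,a}\succeq$ (something controlled by) $\lambda_{\min}$ and the smallest eigenvalue of $\Sigma$. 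This step is essentially Neu's MGR analysis, so I would cite \citet{Neu2020} for the precise constant.

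Next I would translate this matrix bias into the scalar bias of $\langle X_t,\widehat\theta_t(a)\rangle$. Writing $\widehat\theta_t(a) = \widehat\Sigma^\dagger_{t,a}\mathbbm{1}[A_t=a]X_t\ell_t(A_t,X_t)$ and conditioning on $\mathcal F_{t-1}$, one gets $\mathbb{E}[\widehat\theta_t(a)\mid\mathcal F_{t-1}] = \mathbb{E}[\widehat\Sigma^\dagger_{t,a}\mid\mathcal F_{t-1}]\,\Sigma_{t,a}\theta_t(a)$ (the independence of $\widehat\Sigma^\dagger_{t,a}$ from $(A_t,X_t)$ given $\mathcal F_{t-1}$ is what makes MGR work, and the error term drops out in expectation since $\varepsilon_t(a)$ is independent of $\{X_t\}$). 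Hence the bias is $\big(\mathbb{E}[\widehat\Sigma^\dagger_{t,a}\mid\mathcal F_{t-1}] - \Sigma^{-1}_{t,a}\big)\Sigma_{t,a}\theta_t(a)$, and taking the inner product with $X_t$ and using Cauchy--Schwarz with $\|X_t\|_2\le C_{\mathcal X}$, $\|\theta_t(a)\|_2\le C_\Theta$, $\|\Sigma_{t,a}\|\le C_{\mathcal X}^2$, and the tail bound above yields
\begin{align*}
    \left|\mathbb{E}\left[\langle X_t,\theta_t(a)-\widehat\theta_t(a)\rangle\right]\right| \le C_{\mathcal X}^3 C_\Theta \cdot \frac{(1-\delta\lambda_{\min})^{M_t+1}}{\lambda_{\min}},
\end{align*}
after taking a further expectation over $\mathcal F_{t-1}$.

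Finally, I would plug in the algorithm's schedule $M_t = 2\beta_t - 1$ with $\beta_t \ge \beta_1 = \omega\sqrt{\log(KdT)/\log K}$ growing in $t$, and use $(1-\delta\lambda_{\min})^{M_t+1} = (1-\delta\lambda_{\min})^{2\beta_t} \le \exp(-2\delta\lambda_{\min}\beta_t)$; since $\beta_t$ is chosen precisely so that this exponential is $\le 1/T$ up to the remaining constants, the bound collapses to $C_{\mathcal X}C_\Theta/T$ as claimed (the extra $C_{\mathcal X}^2/\lambda_{\min}$ and the factor in the exponent are absorbed by the logarithmic size of $\beta_1$, which is why $\omega = C_\ell C_{\mathcal X}$ and $\delta = 1/(2C_\ell C_{\mathcal X})$ are set the way they are). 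The main obstacle I anticipate is not any single inequality but bookkeeping the constants: one must check that the specific values of $\delta$, $\omega$, $\beta_1$, and $M_t$ chosen in \eqref{eq:entropy2}--\eqref{eq:entropy3} really do force $\exp(-2\delta\lambda_{\min}\beta_t)$ below $1/T$ after multiplication by $C_{\mathcal X}^3 C_\Theta/\lambda_{\min}$ rather than just $C_{\mathcal X}C_\Theta/T$; this likely needs either $\lambda_{\min}\le C_{\mathcal X}^2$ (automatic, since $\lambda_{\min}$ is an eigenvalue of $\Sigma = \mathbb{E}[X_0X_0^\top]$ and $\|X_0\|\le C_{\mathcal X}$) to handle the $C_{\mathcal X}^2/\lambda_{\min}$ factor, or a slightly more careful accounting that I would defer to the MGR lemma of \citet{Neu2020}.
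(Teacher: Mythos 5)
Your overall route is the same as the paper's: the paper simply invokes Lemma~5 of \citet{Neu2020} to get $\left|\mathbb{E}\left[\langle X_t,\theta_t(a)-\widehat\theta_t(a)\rangle\right]\right|\le C_{\mathcal X}C_\Theta\exp\left(-\frac{\gamma_t\delta}{K}\lambda_{\min}M_t\right)$ and then substitutes the schedule for $\gamma_t$ and $M_t$, whereas you re-derive the content of that lemma via the geometric-series tail of the MGR estimator. The re-derivation of the matrix bias and its translation into the scalar bias of $\widehat\theta_t(a)$ are sound in structure (and could even be tightened: the bias of $\widehat\theta_t(a)$ telescopes to exactly $-(I-\delta\Sigma_{t,a})^{M_t+1}\theta_t(a)$, which removes your extra $C_{\mathcal X}^2/\lambda_{\min}$ factor entirely).

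The genuine gap is in the eigenvalue you feed into the geometric ratio. The relevant matrix is $\Sigma_{t,a}=\mathbb{E}\left[\pi_t(a\mid X_0)X_0X_0^\top\mid\mathcal F_{t-1}\right]$, and because of the indicator $\mathbbm{1}[A_t=a]$ its smallest eigenvalue is only guaranteed to be at least $\frac{\gamma_t}{K}\lambda_{\min}$ (via the uniform-exploration floor $\pi_t(a\mid x)\ge\gamma_t/K$), not $\lambda_{\min}$. The decay factor is therefore $\left(1-\delta\frac{\gamma_t}{K}\lambda_{\min}\right)^{M_t}\le\exp\left(-\frac{\gamma_t\delta\lambda_{\min}}{K}M_t\right)$, and it is precisely the pairing $\gamma_t=\frac{K\log(T)}{2\delta\lambda_{\min}\beta_t}$ with $M_t=2\beta_t-1$ that makes this exponent equal to $\log(T)\cdot\frac{2\beta_t-1}{2\beta_t}\approx\log(T)$ uniformly in $t$; the $1/\beta_t$ in $\gamma_t$ and the $\beta_t$ in $M_t$ cancel. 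Your version, $\exp(-2\delta\lambda_{\min}\beta_t)\le 1/T$, does not follow from the parameter choices: $\beta_1=\omega\sqrt{\log(KdT)/\log K}$ is of order $\sqrt{\log T}$, so for small $t$ the quantity $2\delta\lambda_{\min}\beta_t$ is far below $\log T$ (and $\beta_t$ is not ``chosen precisely so that this exponential is $\le 1/T$''). Restoring the $\gamma_t/K$ factor in the eigenvalue bound fixes the argument and recovers the paper's computation.
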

\begin{proof}[Proof of Lemma~\ref{cor:neu2020}]
From Lemma~5 in \citet{Neu2020}, we have $\left| \mathbb{E}\left[\langle X_t, \theta_{t}(a) - \widehat{\theta}_{t}(a) \rangle\right] \right| \leq C_{\mathcal{X}} C_{\Theta}  \exp\left( - \frac{\gamma_t \delta}{K}\lambda_{\min}M_t\right)$. Then, we have
\begin{align*}
    &\exp\left( - \frac{\gamma_t \delta}{K}\lambda_{\min}M_t\right) = \exp\left(-\frac{K\log(T)}{\delta \lambda_{\min}\cdot 2\beta_t}\frac{\delta\lambda_{\min}}{K}M_t\right)\\
    &\leq \exp\left(-\frac{K\log(T)}{\delta \lambda_{\min}\cdot \left(2\beta_t - 1\right)}\frac{\delta\lambda_{\min}}{K}M_t\right)= \exp\left(-\log(T)\right) = \frac{1}{T},
\end{align*}
where recall that we defined $M_t = \beta_t - 1 $
\end{proof}

\subsection{Proof of Theorem~\ref{thm:regret_bound}}
\label{sec:proof2}
Then, we show the following lemma. The proof is shown in Appendix~\ref{appdx:proof_thm9}. 
\begin{lemma}
\label{thm:basic_bounds}
The regret for the {\tt BoBW-RealFTRL} with $\widehat{\Sigma}^\dagger_{t, a}$ is bounded as
    \begin{align*}
        R_T &\leq \mathbb{E}\left[\sum^T_{t=1}\Bigg\{\gamma_t + \frac{3Kd}{\beta_t} + \left(\beta_{t+1} - \beta_t\right)H(q_{t+1}(X_0))\Bigg\}\right] + \beta_1\log(K) + 2 C_{\mathcal{X}} C_{\Theta}.
    \end{align*}
\end{lemma}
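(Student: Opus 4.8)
\textbf{Proof plan for Lemma~\ref{thm:basic_bounds}.}
The plan is to start from the bound in Lemma~\ref{lem:basic} and process its terms one by one using the propositions collected in Section~\ref{sec:proof1}. First I would take the expression inside the expectation in Lemma~\ref{lem:basic}, namely
$\sum_t\{\gamma_t + \langle \widehat\ell_t(X_0), q_t(X_0)-q_{t+1}(X_0)\rangle - D_t(q_{t+1}(X_0), q_t(X_0)) + \psi_t(q_{t+1}(X_0)) - \psi_{t+1}(q_{t+1}(X_0))\}$ together with the boundary term $\psi_{T+1}(\pi^*(X_0)) - \psi_1(q_1(X_0))$, and handle the last error term $2\sum_t \max_a |\mathbb{E}[\langle X_t, \theta_t(a)-\widehat\theta_t(a)\rangle]|$ separately at the very end via Lemma~\ref{cor:neu2020}, which gives at most $2\sum_t C_{\mathcal{X}}C_\Theta/T = 2C_{\mathcal{X}}C_\Theta$, matching the last term in the claimed bound.

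The core of the argument is the stability term. I would first verify that the condition $\widehat\ell_t(a,X_0)/\beta_t \geq -1$ needed to invoke Proposition~\ref{prp:lem8} in the simplified form holds for our estimator — this should follow from the boundedness assumptions together with the choice $M_t = 2\beta_t - 1$ and $\delta = 1/(2C_\ell C_{\mathcal{X}})$, noting $\widehat\ell_t$ is built from the resampling estimator $\widehat\Sigma^\dagger_{t,a}$ whose norm is controlled. Granting that, Proposition~\ref{prp:lem8} (in the stated consequence) gives $\langle \widehat\ell_t(X_0), q_t(X_0)-q_{t+1}(X_0)\rangle - D_t(q_{t+1}(X_0), q_t(X_0)) \leq \frac{1}{\beta_t}\sum_a \pi_t(a\mid X_0)\widehat\ell_t^2(a,X_0)$. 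Taking conditional expectation given $\mathcal{F}_{t-1}$ and applying Proposition~\ref{prp:lemma6_neu} bounds this by $3Kd/\beta_t$; by the tower rule this yields the $\sum_t 3Kd/\beta_t$ term.

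For the regularizer-difference terms, I would use $\psi_t(q) = -\beta_t H(q)$, so $\psi_t(q_{t+1}(X_0)) - \psi_{t+1}(q_{t+1}(X_0)) = (\beta_{t+1}-\beta_t)H(q_{t+1}(X_0))$, which is exactly the third term in the claimed sum (note $\beta_t$ is non-decreasing by \eqref{eq:entropy2}, so this is nonnegative and the inequality direction is fine). For the boundary terms, $-\psi_1(q_1(X_0)) = \beta_1 H(q_1(X_0)) \leq \beta_1 \log K$ since the uniform distribution maximizes entropy over $K$ arms, and $\psi_{T+1}(\pi^*(X_0)) = -\beta_{T+1}H(\pi^*(X_0)) = 0$ because $\pi^*$ is a point mass; discarding this nonpositive term only weakens the bound. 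Collecting the $\gamma_t$ terms untouched (they appear verbatim in the statement), assembling the pieces, and adding back the $2C_{\mathcal{X}}C_\Theta$ from the estimation error gives the claimed inequality. I expect the main obstacle to be the careful justification that $\widehat\ell_t(a,X_0)/\beta_t \geq -1$ — i.e., controlling the (possibly large) entries of the resampling-based inverse-covariance estimator against the growing learning rate $\beta_t$ — since everything else is a fairly mechanical term-by-term bookkeeping using the quoted propositions.
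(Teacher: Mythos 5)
Your proposal is correct and follows essentially the same route as the paper's own proof: bound the stability term via Proposition~\ref{prp:lem8} after checking $\widehat\ell_t(a,x)/\beta_t \ge -1$ and then apply Proposition~\ref{prp:lemma6_neu}, rewrite the regularizer differences as $(\beta_{t+1}-\beta_t)H(q_{t+1})$ plus the boundary terms $\beta_1\log K$ and $\psi_{T+1}(\pi^*)=0$, and control the bias via Lemma~\ref{cor:neu2020}. In fact your treatment of the bias term ($2\sum_t C_{\mathcal{X}}C_\Theta/T = 2C_{\mathcal{X}}C_\Theta$) is the one consistent with the lemma statement, whereas the paper's appendix erroneously writes $1/\sqrt{T}$ per round and arrives at $C_{\mathcal{X}}C_\Theta\sqrt{T}$.
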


From this result, we obtain the following lemma. We provide the proof in Appendix~\ref{appdx:regret_basic_bounds}
\begin{lemma}
\label{lem:regret_basic_bounds}
    Assume the conditions in Theorem~\ref{thm:basic_bounds}. Suppose that $\beta_t$ and $\gamma_t$ satisfy \eqref{eq:entropy2} and \eqref{eq:entropy3}. Then, we have
    \begin{align*}
        R_T \leq \overline{c} \sqrt{\mathbb{E}\left[\sum^T_{t=1}H(q_t(X_0))\right]} + 2 C_{\mathcal{X}} C_{\Theta},
    \end{align*}
    where $\overline{c} = O\left( \frac{K\log(T)}{\beta_1}\left(\frac{\log(T)}{ \delta \lambda_{\min}\log(K)} + d\right) + \beta_1\sqrt{\log(K)}\right)$. 
\end{lemma}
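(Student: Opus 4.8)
The plan is to start from the bound in Lemma~\ref{thm:basic_bounds} and control each of the three summands inside the expectation using the specific choices \eqref{eq:entropy2}--\eqref{eq:entropy3} of $\beta_t$ and $\gamma_t$. First I would substitute $\gamma_t = \frac{K}{2\delta\lambda_{\min}\beta_t}\log(T)$, so that the first term contributes $\sum_{t=1}^T \gamma_t = \frac{K\log(T)}{2\delta\lambda_{\min}}\sum_{t=1}^T \frac{1}{\beta_t}$, and the second term contributes $\sum_{t=1}^T \frac{3Kd}{\beta_t}$. Since $\beta_t$ is nondecreasing in $t$, both are controlled once we bound $\sum_{t=1}^T \frac{1}{\beta_t}$; combining them gives something of order $\left(\frac{K\log(T)}{\delta\lambda_{\min}} + Kd\right)\sum_{t=1}^T \frac{1}{\beta_t}$.

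The heart of the argument is to relate $\sum_{t=1}^T \frac{1}{\beta_t}$ and $\sum_{t=1}^T(\beta_{t+1}-\beta_t)H(q_{t+1}(X_0))$ to $\sqrt{\sum_{t=1}^T H(q_t(X_0))}$. From the update rule \eqref{eq:entropy2}, $\beta_{t+1}-\beta_t = \beta_1 \big/ \sqrt{1 + (\log K)^{-1}\sum_{s=1}^t H(q_s(X_s))}$, and telescoping gives $\beta_{t+1} = \beta_1 + \beta_1\sum_{s=1}^t \big(1 + (\log K)^{-1}\sum_{r=1}^s H(q_r(X_r))\big)^{-1/2}$. Writing $h_t := H(q_t(X_t))$ and $S_t := \sum_{s=1}^t h_s$, one shows by a standard summation-vs-integral comparison that $\beta_{t+1} \asymp \beta_1\big(1 + S_t/\log K\big)^{1/2}$ up to constants (the lower bound $\beta_t \geq \beta_1$ is immediate, and the matching upper bound comes from $\sum_{s=1}^t (1+S_s/\log K)^{-1/2} = O\big((1+S_t/\log K)^{1/2}\big)$ when the increments $h_s$ are $O(\log K)$, which holds since $h_s \le \log K$). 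Consequently $\frac{1}{\beta_t} = O\!\big(\frac{\sqrt{\log K}}{\beta_1}(1 + S_{t-1}/\log K)^{-1/2}\big)$ so $\sum_{t=1}^T \frac{1}{\beta_t} = O\!\big(\frac{\sqrt{\log K}}{\beta_1}\sqrt{1 + S_T/\log K}\big) = O\!\big(\frac{1}{\beta_1}\sqrt{\log K + S_T}\big)$, and similarly $\sum_{t=1}^T(\beta_{t+1}-\beta_t)H(q_{t+1}(X_0)) = O\!\big(\beta_1\sqrt{\log K}\,\sqrt{1 + S_T/\log K}\big)$ after again comparing the sum with an integral of $(1+S_t/\log K)^{-1/2}$ against the increments $h_{t+1}$. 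Collecting everything, the $\beta_1\log(K)$ and $2C_{\mathcal X}C_\Theta$ terms are lower-order, and the bound becomes $R_T = O\Big(\big(\frac{K\log T}{\beta_1}(\frac{\log T}{\delta\lambda_{\min}\log K} + d) + \beta_1\sqrt{\log K}\big)\sqrt{\log K + S_T}\Big)$; taking expectations and using Jensen's inequality ($\mathbb{E}\sqrt{\log K + S_T} \le \sqrt{\log K + \mathbb{E}[S_T]}$) together with the fact that $\mathbb{E}[S_T] = \mathbb{E}[\sum_t H(q_t(X_t))] = \mathbb{E}[\sum_t H(q_t(X_0))]$ by the i.i.d.\ property of the contexts yields the claimed $\overline c\,\sqrt{\mathbb{E}[\sum_t H(q_t(X_0))]}$ (absorbing the additive $\sqrt{\log K}$ into the constant or into the $\max\{\cdot,1\}$ that appears downstream).

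The main obstacle I expect is the two-sided control $\beta_{t+1} \asymp \beta_1\sqrt{1+S_t/\log K}$ — specifically the upper bound on $\beta_{t+1}$, which requires the inequality $\sum_{s=1}^t (1+S_s/\log K)^{-1/2} = O\big((1+S_t/\log K)^{1/2}\big)$. This is the place where the boundedness $h_s \le \log K$ of each per-round entropy is essential (so that consecutive partial sums $S_s$ do not jump), and where one must be careful that the constants do not depend on $T$. A clean way to handle it is the telescoping identity $\sum_{s=1}^t \frac{h_s}{\sqrt{1+S_s/\log K}+\sqrt{1+S_{s-1}/\log K}} \cdot \frac{1}{\log K}= \sqrt{1+S_t/\log K} - 1$ combined with $h_s \le \log K$, which converts the awkward sum into a telescoping one; the analogous manipulation handles $\sum_t (\beta_{t+1}-\beta_t)H(q_{t+1})$. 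Everything else is routine algebra and bookkeeping of the $\log$ factors.
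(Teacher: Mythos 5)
Your overall decomposition (substitute $\gamma_t$, reduce everything to $\sum_{t=1}^T 1/\beta_t$ and to $\sum_{t=1}^T(\beta_{t+1}-\beta_t)H(q_{t+1})$, then telescope and finish with Jensen and the i.i.d.\ property of the contexts) is the same as the paper's, and your treatment of the term $\sum_t(\beta_{t+1}-\beta_t)H(q_{t+1})$ is correct: there the increment $h_{t+1}$ sits in the numerator, so the telescoping identity does the job exactly as in the paper. However, there is a genuine gap in your bound on $\sum_{t=1}^T 1/\beta_t$. You rely on the two-sided estimate $\beta_{t+1}\asymp\beta_1\bigl(1+S_t/\log K\bigr)^{1/2}$ with $S_t=\sum_{s\le t}h_s$, whose upper half requires
\[
\sum_{s=1}^{t}\bigl(1+S_s/\log K\bigr)^{-1/2}=O\Bigl(\bigl(1+S_t/\log K\bigr)^{1/2}\Bigr),
\]
and you justify this by $h_s\le\log K$. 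That inequality is false in general: the hypothesis $h_s\le\log K$ controls the increments from \emph{above}, which is the wrong direction. If the FTRL iterates concentrate quickly (e.g.\ $h_s$ decays geometrically after $s=1$, which is entirely possible in a stochastic instance with a large gap), then $S_t$ stays bounded by $\log K+O(1)$ for all $t$, the left-hand side is $\Theta(t)$, and the right-hand side is $O(1)$. The standard ``sum versus square root'' inequality $\sum_s a_s/\sqrt{\sum_{r\le s}a_r}\le 2\sqrt{\sum_s a_s}$ needs the increment $a_s$ in the numerator; your sum has no such increment, so it cannot be bounded by $\sqrt{S_t}$. Consequently your claim $\sum_t 1/\beta_t=O\bigl(\tfrac{1}{\beta_1}\sqrt{\log K+S_T}\bigr)$ does not follow (and is in fact false in the collapsed-entropy scenario, where $\sum_t 1/\beta_t=\Theta(\log T/\beta_1)$).

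The paper avoids this by using a different lower bound on $\beta_t$: since every increment $\beta_{u+1}-\beta_u$ with $u\le t-1$ is at least $\beta_1/\sqrt{1+(\log K)^{-1}S_t}$, one gets $\beta_t\ge t\beta_1/\sqrt{1+(\log K)^{-1}S_t}$, hence $\sum_{t=1}^T 1/\beta_t\le\frac{1+\log T}{\beta_1}\sqrt{1+(\log K)^{-1}S_T}$ via the harmonic sum. This extra $1+\log T$ factor is precisely where one of the $\log T$'s in the coefficient $\frac{K\log T}{\beta_1}\bigl(\frac{\log T}{\delta\lambda_{\min}\log K}+d\bigr)$ comes from; your route, if it worked, would delete it, which is a signal that something is too good to be true. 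Replacing your step by the paper's $t\beta_1/\sqrt{\cdot}$ bound repairs the argument and the rest of your plan goes through.
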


Next, we consider bounding $\sum^T_{t=1}H(q_t(x))$ by $Q(a^*\mid x)$ as shown in the following lemma.
\begin{lemma}
\label{lem:bound_H}
For any $a^*:\mathcal{X}\to[K]$, the following holds:
\begin{align*}
    \sum^T_{t=1}H(q_t(x)) \leq Q(a^*\mid x) \log\left(\frac{eKT}{Q(a^*\mid x)}\right),
\end{align*}
where $e$ is Napier's constant. 
\end{lemma}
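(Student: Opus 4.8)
The plan is to bound the entropy $H(q_t(x))$ at each round in terms of how far the arm-selection distribution $q_t$ is from the vertex $\pi^*$ that puts all mass on $a^*(x)$, and then sum over $t$. Fix $x\in\mathcal{X}$ and write $p_t := 1 - q_t(a^*(x)\mid x)$, so that $\sum_{t=1}^T (1 - \pi_t(a^*(x)\mid x)) = Q(a^*\mid x)$ and, since $\pi_t = (1-\gamma_t)q_t + \frac{\gamma_t}{K}\iota$, we have $1 - \pi_t(a^*(x)\mid x) = (1-\gamma_t)p_t + \gamma_t\frac{K-1}{K} \ge (1-\gamma_t)p_t$; in particular $\sum_t p_t$ is controlled by $Q(a^*\mid x)$ up to the $\gamma_t$-correction, which I will need to absorb. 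The key pointwise estimate is that for a distribution on $K$ arms whose top coordinate has mass $1 - p$, the Shannon entropy satisfies $H \le p\log(K-1) + h(p) \le p\log K + h(p)$ where $h(p) = -p\log p - (1-p)\log(1-p)$ is the binary entropy; this is the standard "grouping" bound (entropy decomposes as the entropy of the two-block partition $\{a^*\}$ vs. the rest, plus $p$ times the conditional entropy on the rest, which is at most $\log(K-1)$). From $h(p) \le p\log(e/p)$ for $p\in(0,1]$ one gets $H(q_t(x)) \le p_t\log K + p_t\log(e/p_t) = p_t\log(eK/p_t)$.

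Next I would sum this over $t=1,\dots,T$. Since $z\mapsto z\log(eK/z)$ is concave and increasing on $(0, KT]$ (its derivative is $\log(KT/z) \ge 0$ there), Jensen's inequality applied to the uniform average over $t$ gives $\sum_{t=1}^T p_t\log(eK/p_t) \le \left(\sum_t p_t\right)\log\!\left(\frac{eKT}{\sum_t p_t}\right)$. It remains to replace $\sum_t p_t$ by $Q(a^*\mid x)$. Because $z\mapsto z\log(eKT/z)$ is increasing on $(0,KT]$ and $\sum_t p_t \le \sum_t(1-\pi_t(a^*(x)\mid x)) = Q(a^*\mid x) \le KT$ — using $1-\pi_t(a^*(x)\mid x)\ge(1-\gamma_t)p_t \ge$ hmm, this inequality goes the wrong way, so I instead note $p_t \le \frac{1}{1-\gamma_t}(1-\pi_t(a^*(x)\mid x))$ and more simply that $p_t \le 1 - \pi_t(a^*(x)\mid x) + \gamma_t$; the cleanest route, which I expect the paper takes, is to observe $q_t(a^*(x)\mid x) \ge \pi_t(a^*(x)\mid x)$ is false in general, so instead bound directly $H(\pi_t)$ vs $H(q_t)$ or just accept the stated form. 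Assuming the intended reading is that $p_t = 1 - \pi_t(a^*(x)\mid x)$ already (i.e. the lemma is about $\pi_t$, or $\gamma_t$ is folded in), monotonicity gives $\sum_t H(q_t(x)) \le Q(a^*\mid x)\log\!\left(\frac{eKT}{Q(a^*\mid x)}\right)$, which is exactly the claim.

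The main obstacle is handling the $\gamma_t$-mixture carefully: the entropy bound is naturally stated for $q_t$, whereas $Q$ is defined via $\pi_t$, and one must check that the extra exploration mass $\gamma_t/K$ only helps (it increases $1-\pi_t(a^*\mid x)$ relative to a term comparable to $p_t$, so $\sum_t p_t$ is bounded by something like $Q(a^*\mid x) + \sum_t\gamma_t$, or conversely $H(q_t)$ can be bounded using $1-\pi_t(a^*\mid x)$ directly since $q_t(a^*\mid x) = \frac{\pi_t(a^*\mid x) - \gamma_t/K}{1-\gamma_t} \le \frac{\pi_t(a^*\mid x)}{1-\gamma_t}$, hence $1 - q_t(a^*\mid x) \le \frac{1-\pi_t(a^*\mid x)}{1-\gamma_t}$... which again inflates things). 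I would resolve this by noting that in all regimes of interest $\gamma_t$ is small (it shrinks like $1/\beta_t$), so either the correction is lower-order, or — more likely what the paper does — one simply applies the two-block entropy bound and Jensen exactly as above with $p_t := 1-\pi_t(a^*(x)\mid x)$, using $H(q_t(x)) \le H(\pi_t(x))$... but that inequality is also false. So the genuinely careful version establishes $H(q_t(x)) \le (1-\pi_t(a^*(x)\mid x))\log\frac{eK}{1-\pi_t(a^*(x)\mid x)}$ via the chain $1-q_t(a^*(x)\mid x)\le \tfrac{1}{1-\gamma_t}(1-\pi_t(a^*(x)\mid x))$ together with $\gamma_t\le 1/2$ and an explicit check that $z\log(eK/z)$ absorbs the factor $\tfrac{1}{1-\gamma_t}\le 2$ into the $\log$; this constant-factor bookkeeping inside the logarithm is the fiddly part, everything else (grouping bound, $h(p)\le p\log(e/p)$, concavity/Jensen, monotonicity) is routine.
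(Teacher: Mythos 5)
The paper states Lemma~\ref{lem:bound_H} without any proof in the appendix, so there is nothing to match your argument against; judged on its own, your skeleton (the two-block/grouping bound $H(q)\le h(p)+p\log(K-1)$ with $p=1-q(a^*(x)\mid x)$, the estimate $h(p)\le p\log(e/p)$, concavity of $z\mapsto z\log(eK/z)$ plus Jensen over $t$, and monotonicity of $z\mapsto z\log(eKT/z)$ on $(0,KT]$) is the standard and correct route, and you have correctly located the one genuine difficulty: $H$ is evaluated at $q_t$ while $Q(a^*\mid x)$ is defined through $\pi_t=(1-\gamma_t)q_t+\tfrac{\gamma_t}{K}\iota$. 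But you do not resolve it. Your final suggestion --- use $1-q_t(a^*(x)\mid x)\le\tfrac{1}{1-\gamma_t}\bigl(1-\pi_t(a^*(x)\mid x)\bigr)$ and ``absorb'' the factor $\tfrac{1}{1-\gamma_t}\le 2$ into the logarithm --- does not give the stated inequality: for $g(z)=z\log(eK/z)$ one has $g(2z)\le 2g(z)$, so the factor of $2$ comes out in front rather than disappearing into the log, and you only recover the lemma up to a multiplicative constant (enough for Theorem~\ref{thm:regret_bound}, but not the claim as written).

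The clean fix is a two-case argument based on the identity $1-\pi_t(a^*(x)\mid x)=(1-\gamma_t)p_t+\gamma_t\tfrac{K-1}{K}$ with $p_t:=1-q_t(a^*(x)\mid x)$: the right-hand side is a convex combination of $p_t$ and $\tfrac{K-1}{K}$, hence $1-\pi_t(a^*(x)\mid x)\ge\min\{p_t,\tfrac{K-1}{K}\}$. If $p_t\le\tfrac{K-1}{K}$ then $1-\pi_t(a^*(x)\mid x)\ge p_t$, and monotonicity of $g$ on $(0,K]$ gives $H(q_t(x))\le g(p_t)\le g(1-\pi_t(a^*(x)\mid x))$. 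If $p_t>\tfrac{K-1}{K}$ then $1-\pi_t(a^*(x)\mid x)\ge\tfrac{K-1}{K}$ and
\begin{align*}
g\bigl(1-\pi_t(a^*(x)\mid x)\bigr)\ \ge\ g\Bigl(\tfrac{K-1}{K}\Bigr)=\tfrac{K-1}{K}\log\Bigl(\tfrac{eK^2}{K-1}\Bigr)\ \ge\ \tfrac{K-1}{K}\bigl(1+\log K\bigr)\ \ge\ \log K\ \ge\ H(q_t(x)),
\end{align*}
the penultimate inequality being equivalent to $K-1\ge\log K$. Either way $H(q_t(x))\le\bigl(1-\pi_t(a^*(x)\mid x)\bigr)\log\tfrac{eK}{1-\pi_t(a^*(x)\mid x)}$ pointwise, and Jensen applied to the concave map $z\mapsto z\log(eK/z)$ with $z_t=1-\pi_t(a^*(x)\mid x)$ yields exactly $\sum_{t=1}^T H(q_t(x))\le Q(a^*\mid x)\log\bigl(\tfrac{eKT}{Q(a^*\mid x)}\bigr)$. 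With this one insertion your proof is complete; the remaining steps (grouping bound, $h(p)\le p\log(e/p)$, Jensen, monotonicity) are correct as written.
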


By using the above lemmas and propositions, we prove Theorem~\ref{thm:regret_bound}.
\begin{proof}[Proof of Theorem~\ref{thm:regret_bound}]
    From Lemma~\ref{lem:bound_H}, if $Q(a^*\mid x) \leq e$, we have $\sum^T_{t=1}H(q_t(x)) \leq e\log(KT)$ and otherwise, we have $\sum^T_{t=1}H(q_t(x)) \leq Q(a^*\mid x)\log(KT)$. Hence, we have $\sum^T_{t=1}H(q_t(x))  \leq \log(KT)\max\{e, Q(a^*\mid x)\}$. From Lemma~\ref{lem:regret_basic_bounds}, we have
    \begin{align*}
        R_T &\leq \overline{c} \sqrt{\sum^T_{t=1}\mathbb{E}\left[H(q_t(X_0))\right]} + 2 C_{\mathcal{X}} C_{\Theta}\\
        &\leq O\left(\left( \frac{K\log(T)}{\beta_1}\left(\frac{\log(T)}{ \delta \lambda_{\min}\log(K)} + d\right) + \beta_1\sqrt{\log(K)} \right)\sqrt{\log(KT)}\max\Big\{\overline{Q}^{1/2}, 1\Big\}\right).
    \end{align*}
\end{proof}

\section{Conclusion}
\label{sec:conclusion}
We developed a BoBW algorithm for linear contextual bandits. Our proposed algorithm is based on the FTRL approach. In our theoretical analysis, we show that the upper bounds of the proposed algorithm are given as $O\left(\min\left\{\frac{D}{\Delta_{*}} + \sqrt{\frac{CD}{\Delta_{*}}}, \sqrt{\log(KT)TD}\right\}\right)$, where $D =  {K\log(T)}\left(\log(T) + d\log(K)\right)\log(KT)$. This regret upper bound implies $O\left(\min\left\{\sqrt{\frac{TD}{\Delta_{*}}}, \sqrt{\log(KT)TD}\right\}\right)$ regret in an adversarial environment and $O\left(\frac{D}{\Delta_{*}}\right)$ regret in an adversarial environment and $O\left(\frac{D}{\Delta_{*}}\right)$ regret in a stochastic environment. This result also implies $O\left(\frac{(\log(T))^3}{\Delta_{*}}\right)$ regret in a stochastic regime and $O\left(\sqrt{T}(\log(T))^2\right)$ regret in an adversarial regime with respect to $T$.

There are four directions for future work in this study. The first direction is to develop an algorithm that does not require a contextual distribution while maintaining the BoBW property. We expect this extension can be accomplished by applying our proposed method to a method proposed by \citet{liu2023bypassing}, based on the FTRL approach with the log-determinant barrier. We note that standard linear contextual bandits in a stochastic environment do not require the contextual distribution to be known, but it is required for dealing with an adversarial environment.

The second direction is to provide lower bounds in our adversarial regimes.
In existing studies, \citet{Li2021regret} provides a general upper bound that holds for a high-dimensional setting with various margin conditions. We can incorporate such results to derive a lower bound in our problem setting.

The third extension is to develop an algorithm that works for linear contextual bandits without assuming a specific minimum gap constant $\Delta_{*}$. To address this issue, we might use the margin condition to generalize the minimum gap assumption. Lastly, tightening our regret upper bound is also an open problem.

\bibliography{arXiv.bbl} 

\begin{thebibliography}{38}
\providecommand{\natexlab}[1]{#1}
\providecommand{\url}[1]{\texttt{#1}}
\expandafter\ifx\csname urlstyle\endcsname\relax
  \providecommand{\doi}[1]{doi: #1}\else
  \providecommand{\doi}{doi: \begingroup \urlstyle{rm}\Url}\fi

\bibitem[Abbasi-yadkori et~al.(2011)Abbasi-yadkori, P\'{a}l, and
  Szepesv\'{a}ri]{AbbasiYadkori2011}
Yasin Abbasi-yadkori, D\'{a}vid P\'{a}l, and Csaba Szepesv\'{a}ri.
\newblock Improved algorithms for linear stochastic bandits.
\newblock In \emph{Advances in Neural Information Processing Systems
  (NeurIPS)}, 2011.

\bibitem[Abe \& Long(1999)Abe and Long]{AbeLong1999}
Naoki Abe and Philip~M. Long.
\newblock Associative reinforcement learning using linear probabilistic
  concepts.
\newblock In \emph{International Conference on Machine Learning (ICML)}, 1999.

\bibitem[Auer \& Chiang(2016)Auer and Chiang]{AuerChiang2016}
Peter Auer and Chao-Kai Chiang.
\newblock An algorithm with nearly optimal pseudo-regret for both stochastic
  and adversarial bandits.
\newblock In \emph{Conference on Learning Theory (COLT)}, 2016.

\bibitem[Bastani \& Bayati(2020)Bastani and Bayati]{Bastani2020}
Hamsa Bastani and Mohsen Bayati.
\newblock Online decision making with high-dimensional covariates.
\newblock \emph{Operations Research}, 68\penalty0 (1), 2020.

\bibitem[Beygelzimer et~al.(2011)Beygelzimer, Langford, Li, Reyzin, and
  Schapire]{Beygelzimer2011}
Alina Beygelzimer, John Langford, Lihong Li, Lev Reyzin, and Robert Schapire.
\newblock Contextual bandit algorithms with supervised learning guarantees.
\newblock In \emph{International Conference on Artificial Intelligence and
  Statistics (AISTATS)}, 2011.

\bibitem[Bubeck \& Slivkins(2012)Bubeck and Slivkins]{BubeckSlivkins2012}
Sébastien Bubeck and Aleksandrs Slivkins.
\newblock The best of both worlds: Stochastic and adversarial bandits.
\newblock In \emph{Conference on Learning Theory (COLT)}, 2012.

\bibitem[Chen et~al.(2020)Chen, Luo, Zhang, Yu, and Lian]{Chen2020efficient}
Cheng Chen, Luo Luo, Weinan Zhang, Yong Yu, and Yijiang Lian.
\newblock Efficient and robust high-dimensional linear contextual bandits.
\newblock In \emph{International Joint Conference on Artificial Intelligence
  (IJCAI)}, 2020.

\bibitem[Chu et~al.(2011)Chu, Li, Reyzin, and Schapire]{Chu2011contextual}
Wei Chu, Lihong Li, Lev Reyzin, and Robert Schapire.
\newblock Contextual bandits with linear payoff functions.
\newblock In \emph{International Conference on Artificial Intelligence and
  Statistics (AISTATS)}, 2011.

\bibitem[Dani et~al.(2008)Dani, Hayes, and Kakade]{Dani2008StochasticLO}
Varsha Dani, Thomas~P. Hayes, and Sham~M. Kakade.
\newblock Stochastic linear optimization under bandit feedback.
\newblock In \emph{Annual Conference Computational Learning Theory (COLT)},
  2008.

\bibitem[Dann et~al.(2023)Dann, Wei, and Zimmert]{dann2023best}
Christoph Dann, Chen-Yu Wei, and Julian Zimmert.
\newblock Best of both worlds policy optimization.
\newblock In \emph{International Conference on Machine Learning (ICML)}, 2023.

\bibitem[Ding et~al.(2022)Ding, Hsieh, and Sharpnack]{Qin2022robuststochastic}
Qin Ding, Cho-Jui Hsieh, and James Sharpnack.
\newblock Robust stochastic linear contextual bandits under adversarial
  attacks.
\newblock In \emph{International Conference on Artificial Intelligence and
  Statistics (AISTATS)}, 2022.

\bibitem[Goldenshluger \& Zeevi(2013)Goldenshluger and
  Zeevi]{Goldenshluger2013}
Alexander Goldenshluger and Assaf Zeevi.
\newblock {A linear response bandit problem}.
\newblock \emph{Stochastic Systems}, 2013.

\bibitem[Gupta et~al.(2019)Gupta, Koren, and Talwar]{Gupta2019}
Anupam Gupta, Tomer Koren, and Kunal Talwar.
\newblock Better algorithms for stochastic bandits with adversarial
  corruptions.
\newblock In \emph{Conference on Learning Theory (COLT)}, 2019.

\bibitem[Hazan et~al.(2016)Hazan, Koren, Livni, and Mansour]{HazanKoren2016}
Elad Hazan, Tomer Koren, Roi Livni, and Yishay Mansour.
\newblock Online learning with low rank experts.
\newblock In \emph{Conference on Learning Theory (COLT)}, 2016.

\bibitem[He et~al.(2022)He, Zhou, Zhang, and Gu]{he2022nearly}
Jiafan He, Dongruo Zhou, Tong Zhang, and Quanquan Gu.
\newblock Nearly optimal algorithms for linear contextual bandits with
  adversarial corruptions.
\newblock In \emph{Advances in Neural Information Processing Systems
  (NeurIPS)}, 2022.

\bibitem[Ito et~al.(2022)Ito, Tsuchiya, and Honda]{ito2022nearly}
Shinji Ito, Taira Tsuchiya, and Junya Honda.
\newblock Nearly optimal best-of-both-worlds algorithms for online learning
  with feedback graphs.
\newblock In \emph{Advances in Neural Information Processing Systems
  (NeurIPS)}, 2022.

\bibitem[Jin et~al.(2023)Jin, Liu, and Luo]{jin2023improved}
Tiancheng Jin, Junyan Liu, and Haipeng Luo.
\newblock Improved best-of-both-worlds guarantees for multi-armed bandits: Ftrl
  with general regularizers and multiple optimal arms.
\newblock In \emph{Advances in Neural Information Processing Systems
  (NeurIPS)}, 2023.

\bibitem[Kanade \& Steinke(2014)Kanade and Steinke]{KanadeSteinke2014}
Varun Kanade and Thomas Steinke.
\newblock Learning hurdles for sleeping experts.
\newblock \emph{ACM Transactions on Computation Theory}, 6\penalty0 (3), 2014.

\bibitem[Kong et~al.(2023)Kong, Zhao, and Li]{kong2023best}
Fang Kong, Canzhe Zhao, and Shuai Li.
\newblock Best-of-three-worlds analysis for linear bandits with
  follow-the-regularized-leader algorithm.
\newblock In \emph{Conference on Learning Theory (COLT)}, 2023.

\bibitem[Lattimore \& Szepesvari(2017)Lattimore and Szepesvari]{Lattimore2017}
Tor Lattimore and Csaba Szepesvari.
\newblock {The End of Optimism? An Asymptotic Analysis of Finite-Armed Linear
  Bandits}.
\newblock In \emph{International Conference on Artificial Intelligence and
  Statistics (AISTATS)}, 2017.

\bibitem[Lee et~al.(2021)Lee, Luo, Wei, Zhang, and Zhang]{Lee2021achieving}
Chung-Wei Lee, Haipeng Luo, Chen-Yu Wei, Mengxiao Zhang, and Xiaojin Zhang.
\newblock Achieving near instance-optimality and minimax-optimality in
  stochastic and adversarial linear bandits simultaneously.
\newblock In \emph{International Conference on Machine Learning (ICML)}, 2021.

\bibitem[Li et~al.(2021)Li, Yang, and Narisetty]{Li2021regret}
Ke~Li, Yun Yang, and Naveen~N. Narisetty.
\newblock {Regret lower bound and optimal algorithm for high-dimensional
  contextual linear bandit}.
\newblock \emph{Electronic Journal of Statistics}, 15\penalty0 (2):\penalty0
  5652 -- 5695, 2021.

\bibitem[Li et~al.(2010)Li, Chu, Langford, and Schapire]{Li2010}
Lihong Li, Wei Chu, John Langford, and Robert~E. Schapire.
\newblock A contextual-bandit approach to personalized news article
  recommendation.
\newblock In \emph{International Conference on World Wide Web (WWW)}, 2010.

\bibitem[Liu et~al.(2023)Liu, Wei, and Zimmert]{liu2023bypassing}
Haolin Liu, Chen-Yu Wei, and Julian Zimmert.
\newblock Bypassing the simulator: Near-optimal adversarial linear contextual
  bandits.
\newblock In \emph{Advances in Neural Information Processing Systems
  (NeurIPS)}, 2023.

\bibitem[Lykouris \& Vassilvtiskii(2018)Lykouris and
  Vassilvtiskii]{Lykouris2018}
Thodoris Lykouris and Sergei Vassilvtiskii.
\newblock Competitive caching with machine learned advice.
\newblock In \emph{International Conference on Machine Learning (ICML)}, 2018.

\bibitem[Neu \& Olkhovskaya(2020)Neu and Olkhovskaya]{Neu2020}
Gergely Neu and Julia Olkhovskaya.
\newblock Efficient and robust algorithms for adversarial linear contextual
  bandits.
\newblock In \emph{Conference on Learning Theory (COLT)}, 2020.

\bibitem[Rakhlin \& Sridharan(2016)Rakhlin and Sridharan]{Rakhlin2016}
Alexander Rakhlin and Karthik Sridharan.
\newblock Bistro: An efficient relaxation-based method for contextual bandits.
\newblock In \emph{International Conference on Machine Learning (ICML)}, 2016.

\bibitem[Rusmevichientong \& Tsitsiklis(2010)Rusmevichientong and
  Tsitsiklis]{Rusmevichientong2010}
Paat Rusmevichientong and John~N. Tsitsiklis.
\newblock Linearly parameterized bandits.
\newblock \emph{Mathematics of Operations Research}, 35\penalty0 (2):\penalty0
  395--411, 2010.

\bibitem[Seldin \& Lugosi(2017)Seldin and Lugosi]{SeldinLugosi2017}
Yevgeny Seldin and Gábor Lugosi.
\newblock An improved parametrization and analysis of the {EXP3++} algorithm
  for stochastic and adversarial bandits.
\newblock In \emph{Conference on Learning Theory (COLT)}, 2017.

\bibitem[Seldin \& Slivkins(2014)Seldin and Slivkins]{SeldinSlivkins2014}
Yevgeny Seldin and Aleksandrs Slivkins.
\newblock One practical algorithm for both stochastic and adversarial bandits.
\newblock In \emph{International Conference on Machine Learning (ICML)}, 2014.

\bibitem[Syrgkanis et~al.(2016)Syrgkanis, Luo, Krishnamurthy, and
  Schapire]{Syrgkanis2016improved}
Vasilis Syrgkanis, Haipeng Luo, Akshay Krishnamurthy, and Robert~E Schapire.
\newblock Improved regret bounds for oracle-based adversarial contextual
  bandits.
\newblock In \emph{Advances in Neural Information Processing Systems
  (NeurIPS)}, 2016.

\bibitem[Tewari \& Murphy(2017)Tewari and Murphy]{Tewari2017}
Ambuj Tewari and Susan~A. Murphy.
\newblock From ads to interventions: Contextual bandits in mobile health.
\newblock In \emph{Mobile Health: Sensors, Analytic Methods, and Applications},
  pp.\  495--517, 2017.

\bibitem[Tsuchiya et~al.(2023{\natexlab{a}})Tsuchiya, Ito, and
  Honda]{tsuchiya2023best}
Taira Tsuchiya, Shinji Ito, and Junya Honda.
\newblock Best-of-both-worlds algorithms for partial monitoring.
\newblock In \emph{International Conference on Algorithmic Learning Theory
  (ALT)}, 2023{\natexlab{a}}.

\bibitem[Tsuchiya et~al.(2023{\natexlab{b}})Tsuchiya, Ito, and
  Honda]{tsuchiya2023stability}
Taira Tsuchiya, Shinji Ito, and Junya Honda.
\newblock Stability-penalty-adaptive follow-the-regularized-leader: Sparsity,
  game-dependency, and best-of-both-worlds.
\newblock In \emph{Advances in Neural Information Processing Systems
  (NeurIPS)}, 2023{\natexlab{b}}.

\bibitem[Wang et~al.(2018)Wang, Wei, and Yao]{Wang2018j}
Xue Wang, Mingcheng Wei, and Tao Yao.
\newblock Minimax concave penalized multi-armed bandit model with
  high-dimensional covariates.
\newblock In \emph{International Conference on Machine Learning (ICML)}, 2018.

\bibitem[Wei \& Luo(2018)Wei and Luo]{wei2018more}
Chen-Yu Wei and Haipeng Luo.
\newblock More adaptive algorithms for adversarial bandits.
\newblock In \emph{Conference on Learning Theory (COLT)}, 2018.

\bibitem[Zhao et~al.(2021)Zhao, Zhou, and Gu]{zhao2021linear}
Heyang Zhao, Dongruo Zhou, and Quanquan Gu.
\newblock Linear contextual bandits with adversarial corruptions, 2021.
\newblock URL \url{https://openreview.net/forum?id=Wz-t1oOTWa}.

\bibitem[Zimmert \& Seldin(2021)Zimmert and Seldin]{Zimmert2019}
Julian Zimmert and Yevgeny Seldin.
\newblock Tsallis-inf: An optimal algorithm for stochastic and adversarial
  bandits.
\newblock \emph{Journal of Machine Learning Research}, 22\penalty0 (1), 2021.

\end{thebibliography}
\bibliographystyle{tmlr} 

\appendix

\clearpage

\section{Details of Example~\ref{exm:stochastic}}
\label{appdx:example+stochastic}
When $\min_{b\neq a^*_0(x)}\Big\langle x, \theta_0(b)\Big\rangle -  \Big\langle x, \theta_{0}(a^*_0(x))\Big\rangle \geq \Delta_*$ holds for all $x\in\mathcal{X}$, we have
\begin{align*}
    R_T &= \mathbb{E}\left[\sum^T_{t=1}\ell_t(A_t, X_t) - \sum^T_{t=1}\ell_t(a^*, X_t)\right]\\
    &= \mathbb{E}\left[\sum^T_{t=1}\sum_{a\in[X]}X_t\Big(\theta^*_a - \theta^*_{a^*_t}\Big)\pi_t(a\mid X_t)\right]\\
    &= \mathbb{E}\left[\sum^T_{t=1}\sum_{a\in[X]}X_t\Big(\theta^*_a - \theta^*_{a^*_t}\Big)\pi_t(a\mid X_t)\mathbbm{1}\left[\min_{b\neq a^*_t}\Big\langle X_{t}, \theta^*_b\Big\rangle - \Big\langle X_t, \theta^*_{a^*_t}\Big\rangle \leq  \Delta_*\right]\right]\\
    &\ \ \ + \mathbb{E}\left[\sum^T_{t=1}\sum_{a\in[X]}X_t\Big(\theta^*_a - \theta^*_{a^*_t}\Big)\pi_t(a\mid X_t)\mathbbm{1}\left[\min_{b\neq a^*_t}\Big\langle X_{t}, \theta^*_b\Big\rangle - \Big\langle X_t, \theta^*_{a^*_t}\Big\rangle >  \Delta_*\right]\right]\\
    &\geq \mathbb{E}\left[\sum^T_{t=1}\sum_{a\in[X]}X_t\Big(\theta^*_a - \theta^*_{a^*_t}\Big)\pi_t(a\mid X_t)\mathbbm{1}\left[\min_{b\neq a^*_t}\Big\langle X_{t}, \theta^*_b\Big\rangle - \Big\langle X_t, \theta^*_{a^*_t}\Big\rangle >  \Delta_*\right]\right]\\
    &\geq \Delta_* \cdot \sum^T_{t=1}\mathbb{E}\left[Q^2_t(a^*(X_t))\right].
\end{align*}

\section{Proof of Lemma~\ref{lem:basic}}
\label{appdx:lem_baisc}
Let us define
\begin{align*}
    \widehat{R}_T(x) := \sum^T_{t=1}\sum_{a\in[K]} \Big(\pi_t(a\mid x) - \pi^*(a\mid x)\Big)\left\langle x, \widehat{\theta}_{t, a}\right\rangle.
\end{align*}

Then, the following holds:
\begin{align*}
    R_T \leq \mathbb{E}\left[\widehat{R}_T(X_0)\right] + 2\sum^T_{t=1}\max_{a\in[K]}\left| \mathbb{E}\left[\langle X_t, \theta_{t, a} - \widehat{\theta}_{t, a} \rangle\right] \right|.
\end{align*}

Then, we prove Lemma~\ref{lem:basic} as follows. 
\begin{proof}[Proof of Lemma~\ref{lem:basic}]
From the definition of the algorithm, we have
\begin{align}
    R_T((a^*_t)_{t\in[T]}) &= \mathbb{E}\left[\sum^T_{t=1}\ell_t(A_t, X_t) - \sum^T_{t=1}\ell_t(a^*_t, X)\right]\nonumber\\
    &= \mathbb{E}\left[\sum^T_{t=1}\left\langle \ell_t(X_t), \pi_t(X_t) - \pi^*(X_t)\right\rangle \right]\nonumber\\
    &= \mathbb{E}\left[\sum^T_{t=1}\left\langle \ell_t(X_t), q_t(X_t) - \pi^*(X_t)\right\rangle + \sum^T_{t=1}\gamma_t \left\langle \ell_t(X_t), \mu_U - q_t(X_t)\right\rangle \right]\nonumber\\
    &\leq \mathbb{E}\left[\sum^T_{t=1}\Big\langle \ell_t(X_t), q_t(X_t) - \pi^*(X_t)\Big\rangle + \sum^T_{t=1}\gamma_t \right]\nonumber\\
    &= \mathbb{E}\left[\sum^T_{t=1}\Big\langle \ell_t(X_0), q_t(X_0) - \pi^*(X_0)\Big\rangle + \sum^T_{t=1}\gamma_t \right]\nonumber\\
    &= \mathbb{E}\left[\sum^T_{t=1}\left\langle \widehat{\ell}_t(X_0), q_t (X_0)- \pi^*(X_0)\right\rangle + \sum^T_{t=1}\gamma_t \right]\nonumber\\
    &\ \ \ + \mathbb{E}\left[\sum^T_{t=1}\left\langle \ell_{t}(X_0) - \widehat{\ell}_t(X_0), q_t(X_0) - \pi^*(X_0)\right\rangle \right]\nonumber\\
    \label{eq:target3}
    &\leq \mathbb{E}\left[\sum^T_{t=1}\left\langle \widehat{\ell}_t(X_0), q_t (X_0)- \pi^*(X_0)\right\rangle + \sum^T_{t=1}\gamma_t \right] + 2\sum^T_{t=1}\max_{a\in[K]}\left| \mathbb{E}\left[\Big\langle X_t, \theta_{t, a} - \widehat{\theta}_{t, a} \Big\rangle\right] \right|.
\end{align}
Then, from the definitions of $q_t$, for each $x\in\mathcal{X}$, we also have
    \begin{align*}
         &\sum^T_{t=1}\left\langle \widehat{\ell}_t(x), \pi^*(x)\right\rangle + \psi_{T+1}(\pi^*(x))\\
         &\geq \sum^T_{t=1}\left\langle \widehat{\ell}_t(x), q_{T+1}(x) \right\rangle + \psi_{T+1}(q_{T+1}(x)) - \psi_{T+1}(q_{T+1}(x))\\
         &\ \ \ + \psi_{T+1}(\pi^*(x)) - \langle \nabla \psi_t(q_{T+1}(x)), \pi^*(x) - q_{T+1}(x)\rangle\\
         &= \sum^T_{t=1}\left\langle \widehat{\ell}_t(x), q_{T+1}(x) \right\rangle + \psi_{T+1}(q_{T+1}(x)) +D_{T+1}(\pi^*(x), q_{T+1}(x)),
    \end{align*}
    where we used that $\langle \nabla \psi_t(q_{T+1}(x)), \pi^*(x) - q_{T+1}(x)\rangle \geq 0$ holds for a convex function $\psi_t$. Then, it holds that
    \begin{align*}
         &\sum^T_{t=1}\left\langle \widehat{\ell}_t(x), \pi^*(x) \right\rangle + \psi_{T+1}(\pi^*(x))\\
         &\geq \sum^T_{t=1}\left\langle \widehat{\ell}_t(x), q_{T+1}(x) \right\rangle +D_{T+1}(\pi^*(x), q_{T+1}(x)) + \psi_{T+1}(q_{T+1}(x))\\
         &\geq \sum^T_{t=1}\left\langle \widehat{\ell}_t(x), q_{T+1}(x) \right\rangle + \psi_T(q_T(x))\\
         &\ \ \ + D_T(q_{T+1}(x), q_T(x)) + D_{T+1}(\pi^*(x), q_{T+1}(x)) - \psi_T(q_{T+1}(x)) + \psi_{T+1}(q_{T+1}(x))\\
         &= \sum^{T-1}_{t=1}\left\langle \widehat{\ell}_t(x), q_{T+1}(x) \right\rangle + \psi_T(q_T(x))\\
         &\ \ \ + \left\langle \widehat{\ell}_T(x), q_{T+1}(x) \right\rangle + D_T(q_{T+1}(x), q_T(x)) + D_{T+1}(\pi^*(x), q_{T+1}(x)) - \psi_T(q_{T+1}(x)) + \psi_{T+1}(q_{T+1}(x))\\
         &\geq \sum^{T-1}_{t=1}\left\langle \widehat{\ell}_t(x), q_{T}(x) \right\rangle + \psi_T(q_T(x))\\
         &\ \ \ + \left\langle \widehat{\ell}_T(x), q_{T+1}(x) \right\rangle + D_T(q_{T+1}(x), q_T(x)) + D_{T+1}(\pi^*(x), q_{T+1}(x)) - \psi_T(q_{T+1}(x)) + \psi_{T+1}(q_{T+1}(x))\\
         &\geq \sum^T_{t=1}\left\langle \widehat{\ell}_t(x), q_{t+1}(x) \right\rangle + \sum^T_{t=1}D_t(q_{t+1}(x), q_t(x)) - \sum^T_{t=1}\Big\{\psi_t(q_{t+1}(x)) - \psi_{t+1}(q_{t+1}(x))\Big\} + \psi_1(q_1(x)).
    \end{align*}
    Therefore, we have
    \begin{align*}
         &\sum^T_{t=1}\left\langle \widehat{\ell}_t(x), q_t(x) - \pi^*(x) \right\rangle\\
         &\leq \sum^T_{t=1}\left\{\left\langle \widehat{\ell}_t(x), q_t(x) - q_{t+1}(x) \right\rangle - D_t(q_{t+1}(x), q_t(x)) + \psi_t(q_{t+1}(x)) - \psi_{t+1}(q_{t+1}(x))\right\}\\
         &\ \ \ \ \ + \psi_{T+1}(\pi^*(x)) - \psi_1(q_1(x)).
    \end{align*}
    Combining this with \eqref{eq:target3}, we obtain 
     \begin{align*}
        &R_T((a^*_t)_{t\in[T]})\\
        &\leq \mathbb{E}\Bigg[\sum^T_{t=1}\left\{\left\langle \widehat{\ell}_t(X_0), q_t(X_0) - q_{t+1}(X_0) \right\rangle - D_t(q_{t+1}(x), q_t(X_0)) + \psi_t(q_{t+1}(X_0)) - \psi_{t+1}(q_{t+1}(X_0))\right\}\\
         &\ \ \ \ \ + \psi_{T+1}(\pi^*(X_0)) - \psi_1(q_1(X_0)) + \sum^T_{t=1}\gamma_t \Bigg] + 2\sum^T_{t=1}\max_{a\in[K]}\left| \mathbb{E}\left[\Big\langle X_t, \theta_{t, a} - \widehat{\theta}_{t, a} \Big\rangle\right] \right|.
    \end{align*}
\end{proof}

\section{Proof of Lemma~\ref{thm:basic_bounds}}
\label{appdx:proof_thm9}
\begin{proof}[Proof of Lemma~\ref{thm:basic_bounds}]
From Lemma~\ref{lem:basic}, we have
\begin{align*}
    R_T &\leq \mathbb{E}\Bigg[\sum^T_{t=1}\Big(\gamma_t + \left\langle \widehat{\ell}_t(X_0, d), \pi_t(X_0) - q_{t+1}(X_0) \right\rangle - D_t(q_{t+1}(X_0), \pi_t(X_0))\\
    &\ \ \ \ \ \ \ \ \ \ \ \ \ \ \ \ \ \ \ \ \ \ \ \ \ \ \ \ \ \ \ \ \ \ \ \ \ \ \ + \psi_t(q_{t+1}(X_0)) - \psi_{t+1}(q_{t+1}(X_0))\Big)+ \psi_{T+1}(\pi^*(X_0)) - \psi_1(q_1(x))\Bigg]\\
    &+ 2\sum^T_{t=1}\max_{a\in[K]}\left| \mathbb{E}\left[\langle X_t, \theta_{t, a} - \widehat{\theta}_{t, a} \rangle\right] \right|.
\end{align*}

First, we show
\begin{align}
\label{eq:target_lemma49}
    &\mathbb{E}\left[\left\langle \widehat{\ell}_t(X_0), \pi_t(X_0) - q_{t+1}(X_0) \right\rangle - D_t(q_{t+1}(X_0), \pi_t(X_0))\right] \leq  \frac{3Kd}{\beta_t}. 
\end{align}

To show this, we confirm $\frac{\widehat{\ell}_t(a, x)}{\beta_t} \geq -1$, which is necessary to derive an upper bound from Proposition~\ref{prp:lem8}. We have
\begin{align*}
    \frac{1}{\beta_t} \cdot \Big\langle X_0, \widehat{\theta}_t(a) \Big\rangle &= \frac{1}{\beta_t} \cdot X^\top_0 \widehat{\Sigma}^\dagger_{t, a}X_t \Big\langle X_t, \theta_{t, a}\Big\rangle \mathbbm{1}[A_t = a] \geq - \frac{C_{\ell}}{\beta_t} \cdot \left| X^\top_0 \widehat{\Sigma}^\dagger_{t, a} X_t \right|\\
    &\geq - \frac{1}{\beta_t} C_{\ell}C_{\mathcal{X}} \left\| \widehat{\Sigma}^\dagger_{t, a} \right\|_{\mathrm{op}} \geq - \frac{1}{\beta_t} C_{\ell}C_{\mathcal{X}}\delta \left( 1 + \sum^{M_t}_{k=1}\|V_{k, a}\|_{\mathrm{op}} \right)= - \frac{1}{2\beta_t} (M_t + 1),
\end{align*}
where we used that $\delta = \frac{1}{2C_{\ell}C_{\mathcal{X}}}$. 
Here, recall that we defined $M_t$ as $2\beta_t - 1$. Therefore, $\frac{\widehat{\ell}_t(a, x)}{\beta_t} = -1$ holds. Then, we have
\begin{align*}
    &\left\langle \widehat{\ell}_t(x), \pi_t(x) - q_{t+1}(x) \right\rangle - D_t(q_{t+1}(x), \pi_t(x))\\
    &\leq \beta_t\sum_{a\in[K]}\pi_t(a\mid x)\xi\left(\frac{\widehat{\ell}_t(a, x)}{\beta_t}\right) \leq  \frac{1}{\beta_t}\sum_{a\in[K]}\pi_t(a\mid x)\widehat{\ell}^2_t(a, x). 
\end{align*}
Then, from Proposition~\ref{prp:lemma6_neu}, we have \eqref{eq:target_lemma49}. 

From $\psi_t(q(x)) = -\beta_t H(q(x))$, we have
\begin{align*}
&\sum^T_{t=1}\left(\psi_t(q_{t+1}(x)) - \psi_{t+1}(q_{t+1}(x))\right)+ \psi_{T+1}(\pi^*(x)) - \psi_1(q_1(x))\\
&\leq \sum^T_{t=1}\left(\beta_{t+1} - \beta_t\right)H(q_{t+1}(x)) + \beta_1\log(K).
\end{align*}

From Lemma~\ref{cor:neu2020}, we have
\begin{align*}
    \sum^T_{t=1}\max_{a\in[K]}\left| \mathbb{E}\left[\langle X_t, \theta_{t, a} - \widehat{\theta}_{t, a} \rangle\right] \right| \leq \sum^T_{t=1}C_{\mathcal{X}} C_{\Theta}  \frac{1}{\sqrt{T}} = C_{\mathcal{X}} C_{\Theta}\sqrt{T}. 
\end{align*}
\end{proof}

\section{Proof of Lemma~\ref{lem:regret_basic_bounds}}
\label{appdx:regret_basic_bounds}
\begin{proof}[Proof of Lemam~\ref{lem:regret_basic_bounds}]
Firstly, we note that the following equality holds:
\begin{align*}
    &\mathbb{E}\left[\sum^T_{t=1}\left(\beta_{t+1} - \beta_t\right)H(q_{t+1}(X_{t+1}))\right]\\
    &=\mathbb{E}\left[\sum^T_{t=1}\left(\beta_{t+1} - \beta_t\right)\mathbb{E}\left[H(q_{t+1}(X_{t+1}))\mid \mathcal{F}_{t}\right]\right]\\
    &=\mathbb{E}\left[\sum^T_{t=1}\left(\beta_{t+1} - \beta_t\right)\mathbb{E}\left[H(q_{t+1}(X_{0}))\mid \mathcal{F}_{t}\right]\right]\\    &= \mathbb{E}\left[\sum^T_{t=1}\left(\beta_{t+1} - \beta_t\right)H(q_{t+1}(X_0))\right]
\end{align*}
    We show the following two inequalities:
    \begin{align}
    \label{eq:target1}
        \sum^T_{t=1}\Bigg(\gamma_t + \frac{3Kd}{\beta_t}\Bigg)  &= O\left( {\frac{K\log(T)}{\beta_1}\left(\frac{\log(T)}{ \delta \lambda_{\min}\log(K)} + d\right)}\sqrt{\sum^T_{s=1}H\big(q_{t+1}(X_s)\big)}\right)\\
         \label{eq:target2}
        \sum^T_{t=1}\left(\beta_{t+1} - \beta_t\right)H(q_{t+1}(X_{t+1})) &= O\left( \beta_1\sqrt{\log(K)} \sqrt{\sum^T_{t=1}H(q_t(X_{t})} \right).
    \end{align}

    First, we show \eqref{eq:target1}. From $\gamma_t = \frac{K}{4\delta \lambda_{\min}\beta_t}\log(T)$, we obtain
    \begin{align*}
    \sum^T_{t=1}\Bigg(\gamma_t + \frac{3Kd}{\beta_t}\Bigg) = \sum^T_{t=1}\Bigg(\frac{K}{2\delta \lambda_{\min}\beta_t}\log(T) + \frac{3Kd}{\beta_t}\Bigg) = \Bigg(\frac{K}{2\frac{1}{2C_{\ell}C_{\mathcal{X}}}\delta \lambda_{\min}}\log(T) + {3Kd}\Bigg)\sum^T_{t=1}\frac{1}{\beta_t}.
    \end{align*}
    From $\beta_{t+1} = \beta_t + \frac{\beta_1}{\sqrt{1 + \big(\log(K)\big)^{-1}\sum^t_{s=1}H\big(q_{s}(X_s)\big)}}$, we obtain 
    \begin{align*}
        \beta_t = \beta_1 + \sum^{t-1}_{u=1}\frac{\beta_1}{\sqrt{1 + \big(\log(K)\big)^{-1}\sum^u_{s=1}H\big(q_{s}(X_s)\big)}} \geq \frac{t\beta_1}{\sqrt{1 + \big(\log(K)\big)^{-1}\sum^t_{s=1}H\big(q_{s}(X_s)\big)}}.
    \end{align*}
    Therefore, we have
    \begin{align*}
        \sum^T_{t=1}\frac{1}{\beta_t} \leq \sum^T_{t=1}\frac{\sqrt{1 + \big(\log(K)\big)^{-1}\sum^t_{s=1}H\big(q_{s}(X_s)\big)}}{t\beta_1}\leq \frac{1 + \log(T)}{\beta_1}\sqrt{1 + \big(\log(K)\big)^{-1}\sum^T_{s=1}H\big(q_{s}(X_s)\big)}.
    \end{align*}
    By using $H\big(q_{1}(x)\big) = \log(K)$, we obtain
    \begin{align*}
    \sum^T_{t=1}\Bigg(\gamma_t + \frac{3Kd}{\beta_t}\Bigg) = O\left( {\frac{K\log(T)}{\beta_1}\left(\frac{\log(T)}{ \delta \lambda_{\min}\log(K)} + d\right)}\sqrt{\sum^T_{s=1}H\big(q_{t+1}(X_s)\big)}\right).
    \end{align*}

    Next, we show \eqref{eq:target2}. From the definitions of $\beta_t$ and $\gamma_t$, we have
    \begin{align*}
        &\sum^T_{t=1}\left(\beta_{t+1} - \beta_t\right)H(q_{t+1}(X_{t+1})) = \sum^T_{t=1}\frac{\beta_1}{\sqrt{1 + \big(\log(K)\big)^{-1}\sum^t_{s=1}H\big(q_{s}(X_s)\big)}}H(q_{t+1}(X_{t+1}))\\
        &= 2\beta_1\sqrt{\log(K)}\sum^T_{t=1}\frac{H\big(q_{t+1}(X_{t+1})\big)}{\sqrt{\log(K) + \sum^t_{s=1}H\big(q_{s}(X_s)\big)} + \sqrt{\log(K) + \sum^t_{s=1}H\big(q_{s}(X_s)\big)}}\\
        &\leq 2\beta_1\sqrt{\log(K)}\sum^T_{t=1}\frac{H\big(q_{t+1}(X_{t+1})\big)}{\sqrt{\log(K) + \sum^{t+1}_{s=1}H\big(q_{s}(X_s)\big)} + \sqrt{\log(K) + \sum^t_{s=1}H\big(q_{s}(X_s)\big)}}\\
        &\leq 2\beta_1\sqrt{\log(K)}\sum^T_{t=1}\frac{H\big(q_{t+1}(X_{t+1})\big)}{\sqrt{\sum^{t+1}_{s=1}H\big(q_{s}(X_s)\big)} + \sqrt{\sum^t_{s=1}H\big(q_{s}(X_s)\big)}}\\
        &= 2\beta_1\sqrt{\log(K)}\sum^T_{t=1}\frac{H(q_{t+1}(X_{t+1}))}{H(q_{t+1}(X_{t+1}))}\left\{{\sqrt{\sum^{t+1}_{s=1}H\big(q_{s}(X_s)\big)} - \sqrt{\sum^t_{s=1}H\big(q_{s}(X_s)\big)}}\right\}\\
        &= 2\beta_1\sqrt{\log(K)}\sum^T_{t=1}\left\{{\sqrt{\sum^{t+1}_{s=1}H\big(q_{s}(X_s)\big)} - \sqrt{\sum^t_{s=1}H\big(q_{s}(X_s)\big)}}\right\}\\
        &= 2\beta_1\sqrt{\log(K)}\left\{{\sqrt{\sum^{T+1}_{s=1}H\big(q_{s}(X_s)\big)} - \sqrt{H\big(q_{1}(X_1)\big)}}\right\}\\
        &\leq 2\beta_1\sqrt{\log(K)}\sqrt{\sum^{T}_{s=1}H\big(q_{s}(X_s)\big)},
    \end{align*}
    where we used $\sqrt{H(q_{T+1}(X_{T+1}))} \leq \sqrt{H(q_1(X_1))}$. 
    
Inequalities \eqref{eq:target1} and \eqref{eq:target2} combined with the inequality in Theorem~\ref{thm:basic_bounds} yield 
\begin{align*}
    R_T &\leq \mathbb{E}\left[\sum^T_{t=1}\Bigg\{\gamma_t + \frac{3Kd}{\beta_t} + \left(\beta_{t+1} - \beta_t\right)H(q_{t+1}(X_0))\Bigg\}\right] + \beta_1\log(K) + 2 C_{\mathcal{X}} C_{\Theta}\sqrt{T}\\
        &= \mathbb{E}\left[\sum^T_{t=1}\Bigg\{\gamma_t + \frac{3Kd}{\beta_t} + \left(\beta_{t+1} - \beta_t\right)H(q_{t+1}(X_{t+1}))\Bigg\}\right]+ \beta_1\log(K) + 2 C_{\mathcal{X}} C_{\Theta}\sqrt{T}\\
        &= \mathbb{E}\left[\sum^T_{t=1}\left\{O\left( \frac{K\log(T)\big(\log(T) + \delta \lambda_{\min}d\big)}{\beta_1 \delta \lambda_{\min}\log(K)}\sqrt{\sum^T_{s=1}H\big(q_{t+1}(X_s)\big)}\right) + O\left( \beta_1\sqrt{\log(K)} \sqrt{\sum^T_{t=1}H(q_t(X_{t})} \right)\right\}\right]\\
        &\ \ \ \ \ \ + \beta_1\log(K) + 2 C_{\mathcal{X}} C_{\Theta}\sqrt{T}\\
        &= \sum^T_{t=1}\left\{O\left( \frac{K\log(T)\big(\log(T) + \delta \lambda_{\min}d\big)}{\beta_1 \delta \lambda_{\min}\log(K)}\sqrt{\sum^T_{s=1}\mathbb{E}\left[H\big(q_{t+1}(X_0)\big)\right]\big)}\right) + O\left( \beta_1\sqrt{\log(K)} \sqrt{\sum^T_{t=1}\mathbb{E}\left[H(q_t(X_{t})\right]} \right)\right\}\\
        &\ \ \ \ \ \ + \beta_1\log(K) + 2 C_{\mathcal{X}} C_{\Theta}\sqrt{T}.
\end{align*}
Thus, we obtain the regret bound in Lemma~\ref{lem:regret_basic_bounds}. 
\end{proof}

\end{document}